\def\boxit#1{\vbox{\hrule\hbox{\vrule\kern6pt \vbox{\kern6pt#1\kern5pt}
\kern6pt\vrule}\hrule}}
\newtheorem{theorem}{Theorem}[]
\newtheorem{lemma}{Lemma}[]
\newtheorem{proposition}{Proposition}[]
\newtheorem{remark}{Remark}[]
\newtheorem{assumption}{Assumption}
\DeclareMathOperator*{\esssup}{ess\,sup}
\newcommand{\bgamma}{{\boldsymbol \gamma}}
\author[1]{Sehwan Kim}
\author[1,2]{Rui Wang}
\author[3]{Wenbin Lu\thanks{Correspondence author: Wenbin Lu, email: wlu4@ncsu.edu}}
\affil[1]{Department of Population Medicine, Harvard Pilgrim Health Care Institute and Harvard Medical School, Boston, MA }
\affil[2]{Department of Biostatistics, Harvard School of Public Health, Boston, MA 
}
\affil[3]{Department of Statistics, North Carolina State University, Raleigh, NC}
\begin{document}

\title{Self-Consistent Equation-guided Neural Networks for Censored Time-to-Event Data }

\maketitle

\begin{abstract}

In survival analysis, estimating the conditional survival function given predictors is often of interest. There is a growing trend in the development of deep learning methods for analyzing censored time-to-event data, especially when dealing with high-dimensional predictors that are complexly interrelated. Many existing deep learning approaches for estimating the conditional survival functions extend the Cox regression models by replacing the linear function of predictor effects by a shallow feed-forward neural network while maintaining the proportional hazards assumption. Their implementation can be computationally intensive due to the use of the full dataset at each iteration because the use of batch data may distort the at-risk set of the partial likelihood function. To overcome these limitations, we propose a novel deep learning approach to non-parametric estimation of the conditional survival functions using the generative adversarial networks leveraging self-consistent equations. The proposed method is model-free and does not require any parametric assumptions on the structure of the conditional survival function. We establish the convergence rate of our proposed estimator of the conditional survival function. In addition, we evaluate the performance of the proposed method through simulation studies and demonstrate its application on a real-world dataset.

\end{abstract}

\section{Introduction}

    Censored time-to-event data are widely encountered in various fields where understanding the timing of events, such as failure rates or disease progression, is critical, but the exact event times may be partially observed or incomplete. For example, estimating survival probability based on covariate information is essential for risk prediction, which plays a key role in developing and evaluating personalized medicine.

    The Kaplan-Meier (KM) estimator \citep{KM1958KM}, Cox proportional hazards model \citep{cox1972coxph}, and random survival forests \citep{Hemant2008RSF} are commonly-used methods for estimating survival functions. The KM estimator is a non-parametric method suitable for population-level analyses. However, its utility is limited when the objective is to estimate conditional survival probabilities at the individual level. The Cox proportional hazards model offers a semi-parametric approach for estimating conditional survival functions, accommodating the incorporation of covariates. However, violations of the proportional hazards assumption may lead to biased parameter estimates. In contrast, random survival forests provide a non-parametric machine learning alternative that does not depend on this assumption. However, this method tends to overfit by favoring variables with a larger number of unique values, as these provide more potential split points. This increases their likelihood of being selected, even when their predictive contribution is modest, potentially distorting variable importance estimates \citep{strobl2007biasrsf}. This issue can be mitigated by using maximally selected rank statistics as the splitting criterion, albeit at the cost of increased computational runtime \citep{Wright2017splitrsf}.
    

    Integrating deep learning (DL) approaches into survival analysis has led to several methodological advances. These DL-based survival function estimation methods can be categorized by model class, loss function, and parametrization, which are closely interrelated. For clarity, below we provide a review of these methods divided into three main categories: (1) parametric models, (2) discrete-time models, and (3) Cox model-based methods. For a comprehensive review of these approaches, see \citep{Wiegrebe2024review}.

    First, conditional parametric models postulate the survival time as a function of covariates with an error term, typically following a Weibull or Log-normal distribution. Neural networks are used to estimate the parameters associated with the covariate effects \citep{Bennis2020wei, ava2020lognormal}. Another approach involves joint modeling survival times and covariates, assuming that survival times follow a Weibull distribution with parameters dependent on latent variables drawn from a process structured through deep exponential families (DEF) \citep{Ranganath2016deepsurv}. Second, discrete-time models estimate the conditional survival probability at each time interval, often constructed by observed event times, using neural networks, jointly with the censoring indicator \citep{Lee2018deephit} or modeling the conditional mortality function \citep{Gensheimer2018ASD}. Lastly, Cox model-based methods (e.g., \textit{DeepSurv}) use neural networks to model covariate effects on the conditional hazard function and minimize the corresponding partial likelihood for parameter estimation \citep{Katzman2018dsurv}.

    All methods within these three categories, either explicitly or implicitly, assume a specific structure for the survival function or data, such as a particular parametrization of the data-generating process for failure times, proportional hazards, or discrete time, regardless of whether deep learning approaches are applied. Moreover, Cox model-based approaches face a limitation due to the dependency of an individual's partial likelihood on risk sets. This complicates the use of stochastic gradient descent (SGD), as it necessitates using the entire dataset for computing gradient each step, increasing the computational burden. Despite potential mitigation with large batches suggested by \cite{kvamme2019coxtime}, the inherent challenges with batch size and learning efficiency still remain. 
    
    In this work, we propose a novel deep learning-based non-parametric method for estimating conditional survival functions, termed SCENE (\underline{S}elf-\underline{C}onsistent \underline{E}quation-Guided \underline{Ne}ural Net), which uses generative neural networks leveraging self-consistent equations.  The proposed method is model-free and does not impose any parametric assumptions on the structure of the conditional hazard function.


     Our contributions can be summarized in four aspects:
     \begin{itemize} 
 \item    
    We generalize the self-consistent equation for the KM estimator to the estimation of the conditional survival function by introducing a class of infinitely many self-consistent equations that can uniquely determine the true conditional survival function.

    \item  We develop a framework that solves the class of infinitely many self-consistent equations using a min-max optimization, inspired by the Generative Adversarial Network (GAN). Our contribution lies in leveraging a generative neural network to generate survival times for computing the conditional survival function, while employing a discriminative neural network to flexibly represent the weight functions in the self-consistent equations.

    \item  We propose a method for calculating variable importance measures to identify key predictors for building the neural networks, which can be easily incorporated into the proposed min-max optimization framework. Incorporating variable selection based on these measures enhances both the accuracy and interpretability of the resulting neural network-based estimator of the conditional survival function, particularly in high-dimensional settings.

    \item  We establish the convergence rate for the proposed neural network-based estimator of the conditional survival function.

\end{itemize}



The remainder of the paper is organized as follows. In Section 2, we generalize the self-consistent equation for the KM estimator to self-consistent equations for the conditional survival function. 
Section 3 reformulates the problem of solving self-consistent equations into a min-max optimization problem by expressing the task as solving the expectation of weighted self-consistent equations.
Section 4 describes an efficient computing algorithm for the proposed min-max optimization on function classes constructed by a conditional distribution generator and neural networks. The convergence rate for the proposed neural network-based estimator of the
conditional survival function is presented. It also discusses methods for computing variable importance within neural network models and incorporates these variable importance measures during training to enhance the accuracy of the proposed estimator. Section 5 presents simulation studies evaluating the performance of our proposed estimator compared to existing methods. 
In Section 6, we apply SCENE to real-world data to estimate the conditional breast cancer survival probabilities based on a range of covariates including gene expressions and clinical features. Section 7 concludes the paper with a discussion. Proofs of Propositions and Theorems in the main paper are given in the supplementary material. 

\section{Self-Consistent Equations for Conditional Survival Functions}

    In this section, we define the problem setting, introduce the notation used throughout the paper, and generalize the self-consistent equation for the KM estimator to self-consistent equations for conditional survival functions.
    
\subsection{Notation and formulation}

    Let $T_i$, $C_i$, and $X_i\in \mathbb{R}^p$, $i=1,\dots,N$, be $N$ independent copies of random variables $T$, $C$, and $X$, representing the survival time, censoring time, and covariates for individual $i$, respectively. As usual, we assume conditional independent censoring, that is, $T$ and $C$ are independent given covariates $X$. 
    The observed time is $\tilde{T}_i = \min(T_i, C_i)$, accompanied by a censoring indicator $\Delta_i = I(T_i \leq C_i)$. We denote the true conditional survival functions for the survival time and censoring time as $S_T^*(t|x)=P(T>t|X =x)$ and $S_C^*(t|x)=P(C>t|X=x)$, respectively, and the population level survival function $S^*(t)=P(T>t)=E[S_T^*(t|X)]$.  Our objective is to obtain estimates for the true conditional survival function $S_T^{*}(t|x)$ given a dataset $\mathcal{D} = \{\tilde{t}_i, \delta_i, x_i\}_{i=1}^N$, where $\tilde{t}_i$, $\delta_i$, and $x_i$ represent the observed realizations of the random variables $\tilde{T}_i$, $\Delta_i$, and $X_i$, respectively.
    

\subsection{Self-consistent equation for the KM estimator}

     Given $\{\tilde{t}_i, \delta_i\}_{i=1}^N$, it is well known that the non-parametric KM estimator $\hat{S}(t)$ satisfies the self-consistent equation \citep{efron1967two}:
    \begin{equation}\label{kmselfeq2} \begin{split} S(t) &= \frac{1}{N} \sum_{i=1}^N \left\{I(\tilde{t}_i > t) + (1 - \delta_i) \frac{I(\tilde{t}_i \leq t)}{S(\tilde{t}_i)} S(t)\right\}, \end{split} \end{equation} for all $t$. Deriving the empirical self-consistent equations for the conditional survival function directly from Equation (\ref{kmselfeq2}) is challenging in part due to the presence of indicator functions. Therefore, we consider its limiting form as described below. 

    
    As $N \to \infty$, equation (\ref{kmselfeq2}) converges to its limiting form as follows:
    \begin{equation}\label{kmselfeq} \begin{split} S(t) &= E_{\tilde{T}}\{I(\tilde{T} > t)\} + E_C\left\{\frac{S^*(C)}{S(C)}I(C \leq t)\right\} S(t). \end{split} \end{equation}
    
    Note that true survival function $S^*(t)$ satisfies Equation (\ref{kmselfeq}) for all $t \ge 0$. In the following subsection, we generalize this population-level self-consistent equation (\ref{kmselfeq}) to self-consistent equations for the conditional survival function. 
    
    

\subsection{Self-consistent equations for conditional survival functions}\label{subsection:sef-consistent}

    To construct self-consistent equations for the conditional survival function, we naturally replace $S(\cdot)$ and $S^*(\cdot)$ with the corresponding conditional survival functions and replace the expectation with the conditional expectation given $x$. Specifically, we obtain a self-consistent equation for the conditioanal survival function for each $x \in \mathcal{X}$ in the following limiting form:
    \begin{equation}\label{const:2}
    \begin{split}
        S_T(t|x) &= E_{\tilde{T}}[I(\tilde{T}>t)|X=x] + E_C\left[\frac{S_T^*(C|x)}{S_{T}(C|x)} I(C \leq t) \bigg|X= x\right] S_T(t|x).
    \end{split}
    \end{equation}

    

    Next, we show that given $X=x$, the true conditional survival function $S_T^*(\cdot|x)$ is the unique solution to Equation (\ref{const:2}). 
    
\begin{proposition}[Uniqueness of solution]\label{uniquesol} Given $X = x\in \mathcal{X}$, the support of $X$, if $S_T(t|x)$ satisfies the Equation (\ref{const:2}) for all $t\in \mathcal{T}\subseteq \mathbb{R}^+$, then $S_T(t|x)=S_T^*(t|x)$  for $t\in \mathcal{T}$ almost surely. 
\end{proposition}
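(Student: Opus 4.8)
The plan is to first verify that the true conditional survival function $S_T^*(\cdot|x)$ solves (\ref{const:2}), and then to show any solution must equal it by reducing (\ref{const:2}) to a linear Volterra--Stieltjes equation whose solution is unique.

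For the first step I would substitute $S_T=S_T^*$ into the right-hand side of (\ref{const:2}). Conditional independent censoring gives $E_{\tilde T}[I(\tilde T>t)\mid X=x]=P(T>t,C>t\mid X=x)=S_T^*(t|x)\,S_C^*(t|x)$, while the second term collapses because the ratio $S_T^*(C|x)/S_T^*(C|x)$ equals $1$, leaving $E_C[I(C\le t)\mid X=x]\,S_T^*(t|x)=(1-S_C^*(t|x))\,S_T^*(t|x)$; the two pieces sum to $S_T^*(t|x)$, so $S_T^*(\cdot|x)$ is a solution.

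For uniqueness, let $S_T(\cdot|x)$ be any solution and restrict to the part of $\mathcal{T}$ on which $S_C^*(\cdot|x)>0$ (beyond this region the conditional survival function is not identifiable from the observed data). On this range $S_T(\cdot|x)$ is strictly positive, which one checks is forced by (\ref{const:2}), since the first term on its right equals $S_T^*(t|x)S_C^*(t|x)>0$ and the weight term is nonnegative; hence we may divide (\ref{const:2}) through by $S_T(t|x)$. Writing $\psi(t):=S_T^*(t|x)/S_T(t|x)$, letting $F_C(\cdot|x)$ denote the conditional distribution function of $C$ given $X=x$, and using $E_{\tilde T}[I(\tilde T>t)\mid X=x]=S_T^*(t|x)S_C^*(t|x)$, the equation becomes
\begin{equation*}
\psi(t)\,S_C^*(t|x)+\int_{[0,t]}\psi(s)\,dF_C(s|x)=1,
\end{equation*}
with $\psi(0)=1$. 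One checks $\psi\equiv1$ solves this (its left side is then $S_C^*(t|x)+F_C(t|x)=1$). To show it is the only solution I would subtract the equations corresponding to two solutions: their difference $\delta$ satisfies the homogeneous relation $\delta(t)\,S_C^*(t|x)=-\int_{[0,t]}\delta(s)\,dF_C(s|x)$, so on any compact subinterval $[0,\tau]$ with $S_C^*(\cdot|x)\ge S_C^*(\tau|x)=:c>0$ one gets $|\delta(t)|\le c^{-1}\int_{[0,t]}|\delta(s)|\,dF_C(s|x)$, and iterating this bound (Picard iteration, or Gronwall's inequality for the finite measure $dF_C(\cdot|x)$) forces $\delta\equiv0$. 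Hence $\psi\equiv1$, i.e., $S_T(t|x)=S_T^*(t|x)$ on that range.

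I expect the main obstacle to be the uniqueness of the reduced equation near the edge of the support of $C$: the Picard/Gronwall estimate requires $S_C^*(\cdot|x)$ bounded away from zero, which holds only on compact subintervals strictly inside the support, and atoms in the censoring distribution would further weaken the contraction step. Assuming continuous $T$ and $C$ and taking $\mathcal{T}$ inside the support of $C$ handles this cleanly; the "almost surely" qualifier in the statement reflects that the conditional expectations in (\ref{const:2}) are defined only for almost every $x$ with respect to the law of $X$.
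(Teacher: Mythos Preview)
Your argument is correct but follows a genuinely different route from the paper. The paper proceeds by contradiction: assuming a second solution $S_T^a(\cdot|x)$, it looks at the set $\mathcal{T}_D=\{t:S_T^*(t|x)\neq S_T^a(t|x)\}$, splits it according to the sign of the difference, and argues that on the first open interval of disagreement the expectation $E_C[\,S_T^*(C|x)/S_T^a(C|x)\,I(C\le t_0)\,|\,x\,]$ is either strictly below or strictly above $1-S_C^*(t_0|x)$, which is incompatible with the self-consistent equation at $t_0$. Your approach instead divides (\ref{const:2}) by $S_T(t|x)$ to obtain the linear Volterra--Stieltjes equation $\psi(t)S_C^*(t|x)+\int_{[0,t]}\psi(s)\,dF_C(s|x)=1$ in the ratio $\psi=S_T^*/S_T$, and then invokes a Gronwall/Picard uniqueness argument for the homogeneous equation. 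Your route is more structural: it exposes the equation as linear in $\psi$, makes the role of the identifiability condition $S_C^*(\cdot|x)>0$ completely explicit, and the Gronwall step is a standard off-the-shelf result. The paper's route is more elementary in that it needs no integral-equation machinery, but the contradiction in its Case~(ii) is stated rather tersely and implicitly relies on the same ``first interval of disagreement'' idea you would get by unwinding your Picard iteration. Both arguments need continuity of $C$ (or at least control of atoms) and $S_C^*>0$ on $\mathcal{T}$, which you flag clearly and the paper leaves implicit.
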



\section{Reformulation of Self-Consistent Equations for Conditional Survival Functions} 

In subsection \ref{subsection:weightedsce}, we introduce a class of weighted self-consistent equations that can ensure Equation (\ref{const:2}) to hold under certain conditions. This approach generalizes the problem of solving self-consistent equations by embedding it within a broader class of equations. In subsection \ref{subsection:minmax}, we reformulate this broader class of problems as a min-max optimization, translating the task of solving infinitely many weighted self-consistent equations into an optimization framework.

\subsection{Weighted self-consistent equations}\label{subsection:weightedsce}

    Define $D^P(t, S)$ as the square of the expected difference between the left and right sides of Equation (\ref{const:2}), specifically, 
        \[
        D^P(t,S)=\left\{E_X\left( S_T(t|X)-\left[ S_{T}^*(t|X)S_{C}^*(t|X)+E_C\left\{
        \frac{S_T^*(C|X)}{S_{T}(C|X)} I(C\leq t) |X\right\}S_T(t|X)\right]\right)\right\}^2,
        \]
    where the squared term ensures convexity and achieves a minimum value of zero when $S_T(t|X)$ satisfies equation (\ref{const:2}). On the other hand, deviations of $D^P(t,S)$ from zero indicate that for some $x \in \mathcal{X}$, $S_T(t|x)$ does not satisfy the self-consistent Equation (\ref{const:2}).
    
    We then extend $D^P(t, S)$ to $D^I(t, S, \phi)$, by incorporating a weighting function $\phi(X)$:
        \begin{equation}\label{const:1}
            \begin{split}
            D^I(t,S,\phi)=\left[E_X\left\{\left( S_T(t|X)-\left[ S_{T}^*(t|X)S_{C}^*(t|X)+E_C\left\{
        \frac{S_T^*(C|X)}{S_{T}(C|X)} I(C\leq t) |X\right\}S_T(t|X)\right]\right)\phi(X)\right\}\right]^2.
        \end{split}
        \end{equation}
    By choosing different weight functions, it allows us to quantify how well $S_T(t|X)$ satisfies the self-consistent equation for a specific covariate value $x$. For example, we can set $\phi(X) = 1$ for $X = x$ and $0$ otherwise. In general, we can show that, if $D^I(t,S,\phi) = 0$ for $\phi \in \Phi_B$, where $\Phi_B \equiv \{\text{all non-negative, bounded functions with} \,\, \phi(\cdot) \le B\}$ for some $B > 0$, then $S(t|x)$ satisfies Equation (\ref{const:2}) for almost every $x \in \mathcal{X}$ at given time $t$.

    
    \begin{proposition}\label{prop:equiproblem} For a given \( t \in \mathbb{R}^+ \), \( D^I(t, S, \phi) = 0 \) for all \( \phi \in \Phi_B \) if and only if \( S(t|x) \) satisfies Equation (\ref{const:2}) for almost every \( x \in \mathcal{X} \).
    \end{proposition}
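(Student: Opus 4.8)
The plan is to collapse $D^I(t,S,\phi)$ into a single scalar functional of a ``residual'' function and then invoke the standard duality between ``a signed integrand tests to zero against every nonnegative bounded weight'' and ``the integrand vanishes almost everywhere.'' Fix $t$ and $S$ throughout, and define the residual
\[
r(x) := S_T(t|x) - \left[\, S_T^*(t|x)\,S_C^*(t|x) + E_C\left\{\frac{S_T^*(C|x)}{S_T(C|x)}\,I(C\le t)\,|\,X=x\right\}S_T(t|x)\,\right],
\]
so that $D^I(t,S,\phi) = \bigl[E_X\{r(X)\phi(X)\}\bigr]^2$, and ``$S(t|x)$ satisfies Equation (\ref{const:2})'' is literally the statement $r(x)=0$. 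A preliminary step is to record that $r$ is $P_X$-measurable (it is assembled from the conditional survival functions and one conditional expectation, all measurable in $x$) and that $r\in L^1(P_X)$: since $S_T, S_T^*, S_C^*$ take values in $[0,1]$ and, under the standing regularity assumption, $S_T(\cdot|x)$ is bounded below by a positive constant on the time window of interest, the conditional expectation term is finite $P_X$-a.e., so $E_X|r(X)|<\infty$.

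The ``if'' direction is immediate: if $r(x)=0$ for $P_X$-almost every $x$, then $E_X\{r(X)\phi(X)\}=0$ for every bounded $\phi$, hence in particular for every $\phi\in\Phi_B$, giving $D^I(t,S,\phi)=0$.

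For the ``only if'' direction, suppose $D^I(t,S,\phi)=0$, i.e.\ $E_X\{r(X)\phi(X)\}=0$, for all $\phi\in\Phi_B$. The key step is a \emph{bang-bang} choice of weight: take $\phi_+(x)=B\,I(r(x)>0)$ and $\phi_-(x)=B\,I(r(x)<0)$. Both are nonnegative, measurable (because $r$ is), and bounded by $B$, hence lie in $\Phi_B$. Plugging in $\phi_+$ gives $B\,E_X\{\max(r(X),0)\}=0$, so the positive part of $r$ vanishes $P_X$-a.s.; plugging in $\phi_-$ gives $B\,E_X\{\max(-r(X),0)\}=0$, so the negative part vanishes $P_X$-a.s. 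Combining, $r(X)=0$ for $P_X$-almost every $x$, which is exactly the claim.

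The only point that requires genuine care — and thus the main obstacle — is the integrability and measurability bookkeeping around the residual $r$, specifically confirming that the term $E_C\{S_T^*(C|x)/S_T(C|x)\,I(C\le t)\,|\,X=x\}$ is finite $P_X$-a.e. so that $r\in L^1(P_X)$; this is what legitimizes splitting $r$ into positive and negative parts and concluding genuine equalities (rather than an indeterminate $\infty-\infty$) from the indicator weights $\phi_\pm$. This finiteness is supplied by the positivity of $S_T(\cdot|x)$ on $\mathcal{T}$; once it is in hand, the remainder is the routine duality argument above, and the same reasoning restricted to a single atom or to balls shrinking to $x$ recovers the pointwise intuition behind the weight $\phi(X)=I(X=x)$ mentioned before the statement.
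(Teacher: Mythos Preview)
Your proof is correct and is essentially identical to the paper's argument: the paper also defines the sets $\mathcal{X}_P=\{x:r(x)>0\}$ and $\mathcal{X}_N=\{x:r(x)<0\}$ and tests with the indicator weights $\phi_P=B\,I_{\mathcal{X}_P}$ and $\phi_N=B\,I_{\mathcal{X}_N}$, which are exactly your $\phi_+$ and $\phi_-$. Your presentation is slightly more careful in making the integrability of $r$ explicit, but the core idea and the key test functions coincide.
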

    
    Therefore, for a given $t$, minimizing Equation (\ref{const:1}) for all $\phi \in \Phi_B$ is equivalent to solving the self-consistent equations in Equation (\ref{const:2}) for all $x \in \mathcal{X}$.

\subsection{Min-max optimization for solving the class of weighted self-consistent equations}\label{subsection:minmax}


    Instead of solving the weighted self-consistent equation for each $\phi \in \Phi_B$, which is generally infeasible,
    we formulate the problem under a min-max optimization framework. To achieve this, we introduce a loss function $C(S, \phi)$ defined as follows:
    \begin{equation} 
     C(S,\phi) = E_{V}[D^I(V,S,\phi)],
    \end{equation}
    where $V$ is a random variable following some distribution whose support matches that of observed event times, for example, $V$ can follow the empirical distribution of $\tilde{t}_i$, $i=1,\cdots,N$.  It can be seen that $C(S, \phi) = 0$ implies $D^I(t, S, \phi) = 0$ almost surely for $t \in \mathcal{V}$, where $\mathcal{V}$ is the support of $V$.



        
   This motivates us to consider the following min-max optimization:
    \begin{equation}\label{min-maxgame-population} \begin{split} \underset{S \in \mathcal{S}}{\min} \ \underset{\phi \in \Phi_B}{\max} \ C(S, \phi), \end{split} \end{equation}
    where $\mathcal{S}$ represents the class of conditional survival functions $S_T(t|x)$. We show that the proposed min-max optimization of $C(S,\phi)$ is equivalent to solving self-consistent equations (\ref{const:2}) for every $t\in \mathcal{V}$ and $x\in \mathcal{X}$.
    
    \begin{theorem}\label{thm:1} 
    The minimum of $C(S, \phi)$ exists and is equal to $0$. Furthermore, $\underset{\phi \in \Phi_B}{\max} \ C(S, \phi) = 0$ if and only if $ S_T(t|x)=S_T^*(t|x)$ for $t \in \mathcal{V}$ and $x \in \mathcal{X}$ almost surely.
    \end{theorem}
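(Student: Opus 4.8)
\emph{Proof proposal.} The plan is to combine three observations: (a) $C(S,\phi)$ is an expectation of a square, hence non-negative; (b) it vanishes at the true survival function $S_T^*$; and (c) its vanishing after the inner maximization forces the self-consistency residual to be identically zero, which reduces everything to Propositions~\ref{prop:equiproblem} and~\ref{uniquesol}. Write $R(t,x)$ for the difference between the two sides of Equation~(\ref{const:2}), so that $D^I(t,S,\phi)=[E_X\{R(t,X)\phi(X)\}]^2\ge 0$ and therefore $C(S,\phi)=E_V[D^I(V,S,\phi)]\in[0,\infty]$ for every $S\in\mathcal{S}$ and $\phi\in\Phi_B$; in particular $\max_{\phi\in\Phi_B}C(S,\phi)\ge 0$ for all $S$. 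Evaluating at $S=S_T^*$ and using conditional independent censoring, $E_{\tilde T}[I(\tilde T>t)\mid X=x]=S_T^*(t|x)S_C^*(t|x)$, so $R(t,x)\equiv 0$, giving $C(S_T^*,\phi)=0$ for all $\phi$ and $\max_{\phi}C(S_T^*,\phi)=0$. Hence $\min_{S\in\mathcal{S}}\max_{\phi\in\Phi_B}C(S,\phi)=0$ and the outer minimum is attained (at $S_T^*$, and by the equivalence below only there). The same computation gives the easy direction: if $S_T(t|x)=S_T^*(t|x)$ for $t\in\mathcal{V}$ and a.e.\ $x$, then $S_T^*(C|x)/S_T(C|x)=1$ on the relevant range and, using $P(C\le t\mid X=x)=1-S_C^*(t|x)$, one checks $R(t,x)=0$; thus $C(S,\phi)=0$ for all $\phi$ and $\max_{\phi}C(S,\phi)=0$.

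For the converse, assume $\max_{\phi\in\Phi_B}C(S,\phi)=0$, so that $C(S,\phi)=0$ for every $\phi\in\Phi_B$; since $D^I\ge 0$, for each fixed $\phi$ this gives $D^I(t,S,\phi)=0$ for $V$-a.e.\ $t$. To turn this into a statement uniform in $\phi$, I would fix once and for all a countable subfamily $\{\phi_k\}\subset\Phi_B$ that is determining, in the sense that $E_X\{R(t,X)\phi_k(X)\}=0$ for all $k$ already forces $R(t,X)=0$ almost surely --- for instance the indicators of a countable generating algebra of the Borel sets of $\mathcal{X}$, together with a monotone-class argument; this is exactly the device behind Proposition~\ref{prop:equiproblem}. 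Discarding the countable union of the associated $V$-null sets, there is a single $V$-null set off which $E_X\{R(t,X)\phi_k(X)\}=0$ for all $k$, hence $R(t,X)=0$ for a.e.\ $x$; equivalently (Proposition~\ref{prop:equiproblem}), $S_T(t|\cdot)$ satisfies Equation~(\ref{const:2}) for a.e.\ $x$, for $V$-a.e.\ $t$. Proposition~\ref{uniquesol} then yields $S_T(t|x)=S_T^*(t|x)$ for $V$-a.e.\ $t$ and a.e.\ $x$, which is the claimed conclusion; if one additionally wants equality at every $t\in\mathcal{V}$, right-continuity and monotonicity of the survival functions remove the $V$-exceptional set.

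The main obstacle is the handling of exceptional sets in the converse direction. For each individual weight $\phi$ we only control the residual up to a $V$-null set that a priori depends on $\phi$, whereas Propositions~\ref{prop:equiproblem} and~\ref{uniquesol} are invoked \emph{pointwise} in $t$ and require the residual to vanish for \emph{all} admissible $\phi$ at that $t$; since $\Phi_B$ is uncountable, the reduction to a countable determining subfamily is what bridges the gap. A lesser technical point is the measurability of $t\mapsto D^I(t,S,\phi)$ (needed for $C$ to be well defined) and the upgrade from ``$V$-a.e.\ $t$'' to ``every $t\in\mathcal{V}$''. Beyond these, the argument is routine: non-negativity of $C$, the check that $S_T^*$ is a zero, and the appeals to the two earlier propositions.
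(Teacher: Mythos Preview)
Your proposal is correct and follows essentially the same route as the paper: non-negativity of $C$, vanishing at $S_T^*$, and then invoking Propositions~\ref{prop:equiproblem} and~\ref{uniquesol} to pass from $C(S,\phi)=0$ for all $\phi$ back to $S_T=S_T^*$. In fact you are more careful than the paper's own proof, which simply asserts ``$C(S,\phi)=0$ for all $\phi\in\Phi_B$, so that $D^I(t,S,\phi)=0$ for all $t\in\mathcal{V}$ and $\phi\in\Phi_B$'' without addressing the $\phi$-dependent $V$-null sets; your countable-determining-family reduction is the correct way to make that step rigorous.
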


\begin{remark}
The motivation behind this strategy is analogous to the min-max optimization framework commonly used in Generative Adversarial Networks (GANs) \citep{fellow2014gan}. In GANs, two neural networks, a generator and a discriminator, are trained in opposition: the generator produces synthetic samples that mimic the true data distribution, while the discriminator assesses the authenticity of these samples. Usually, loss functions like the Jensen-Shannon divergence or the Wasserstein distance are used to measure the distance between the true and generated sample distributions. Similarly, in our approach, we treat $\phi$ as a discriminator and identify the function $\phi$ that maximizes $C(S, \phi)$ for a given $S$, thereby resulting in the maximal deviation of the current $S$ from satisfying the self-consistent equations. We then determine the $S$ that minimizes this $C(S, \phi)$, thus solving the self-consistent equations under the challenging 
$\phi$ identified in the previous step. Unlike the loss functions in GANs, which focus distance between distributions, the proposed loss function here is designed to solve the self-consistent equations. While $\phi$ plays a similar role of a GAN discriminator, it is more accurately understood as a weight function for the self-consistent equations.
\end{remark}

Lastly, we replace $C(S,\phi)$ with its empirical estimator, denoted as $C_{M,N}(S,\phi)$, that is, 
\begin{equation}\label{empiricalloss}
    \begin{split}
        C_{M,N}(S, \phi) = \frac{1}{M} \sum_{m=1}^M 
        \Bigg[ & \frac{1}{N} \sum_{i=1}^N S_T(V_m | x_i) \phi(x_i) \\
        & - \frac{1}{N} \sum_{i=1}^N \left\{ I(\tilde{t}_i > V_m) 
        + \frac{I(\delta_i = 0)}{S_T(\tilde{t}_i | x_i)} I(\tilde{t}_i \leq V_m) S_T(V_m | x_i) \right\} \phi(x_i)
        \Bigg]^2
        \end{split}
\end{equation}
where $V_1,\cdots,V_M$ are independent samples from the distribution of $V$. It is easy to verify that $C_{M,N}(S,\phi)$ converges to $C(S,\phi)$ as $M,N\to \infty$. Our proposed estimator of the conditional survival function is the solution to the following min-max optimization: 

\begin{equation}\label{min-maxgame-empirical}
    \underset{S \in \mathcal{S}}{\min} \ \underset{\phi \in \Phi_B}{\max} \ C_{M,N}(S, \phi).
\end{equation}

\section{\underline{S}elf-\underline{C}onsistent \underline{E}quation-Guided \underline{Ne}ural Net (SCENE)}

    The min-max optimization framework in Equation (\ref{min-maxgame-empirical}) provides a general methodology for estimating the conditional survival function. This framework supports a wide range of modeling approaches for the search space $\mathcal{S}$, ranging from parametric to fully non-parametric. 
    In the fully nonparametric setting, we can model the class of conditional survival functions using monotonic neural networks (MNN), as they enforce monotonicity in the estimated conditional survival functions by constraining the network weights to be positive \citep{daniels2010mnn, zhang2018cdf}.  However, enforcing the positivity constraint in parameters of neural nets introduces challenges in maintaining training stability, and identifying the optimal MNN architecture remains a nontrivial task. Here, we propose constructing the class of conditional survival functions using an empirical cumulative distribution function derived from a conditional distribution generator. This approach not only allows us to sufficiently approximate the whole class of conditional survival functions but also transforms the min-max optimization task into a continuous optimization problem, enabling the parameters of the conditional distribution generator to be updated iteratively using stochastic gradient methods. 
    
    \subsection{Proposed SCENE estimation}\label{subsection:functionclass}

   
    We adopt a conditional distribution generator \citep{Mirza2014conditional} to approximate the class of conditional survival functions. Specifically, we generate multiple samples $T_{ki}$, for $k = 1, \dots, K$, for each $X_i$ using a conditional generator $G_{\omega}$, a neural network parameterized by $\omega$. The conditional distribution generator $G_{\omega}$ takes two inputs: an auxiliary variable $U_k \in \mathbb{R}^{p_u}$, where $p_u$ is the dimension of the auxiliary variable, and the covariate $X_i$. The auxiliary variable $U_k$ is sampled from a pre-selected distribution $\pi(\cdot)$, such as the multivariate uniform or multivariate normal distribution. The output $T_{ki} = G_{\omega}(U_k, X_i)$ represents the $k$th generated sample of the survival time, conditional on the covariate $X_i$. The structure of the conditional generator is illustrated in Fig. \ref{fig:congenerator}. Consequently, the conditional survival function, constructed from the conditional distribution generator, can be expressed as in Equation (\ref{expression:ecdf}):
    \begin{equation}\label{expression:ecdf}
        \begin{split}
            S(t|X_i) &= \frac{1}{K} \sum_{k=1}^{K} I(T_{ki} > t), \quad \text{where} \quad T_{ki} = G_{\omega}(U_k, X_i), \quad U_k \sim \pi(u).
        \end{split}
    \end{equation}

\begin{figure}[!htbp]
    \centering
    \includegraphics[width=\textwidth]{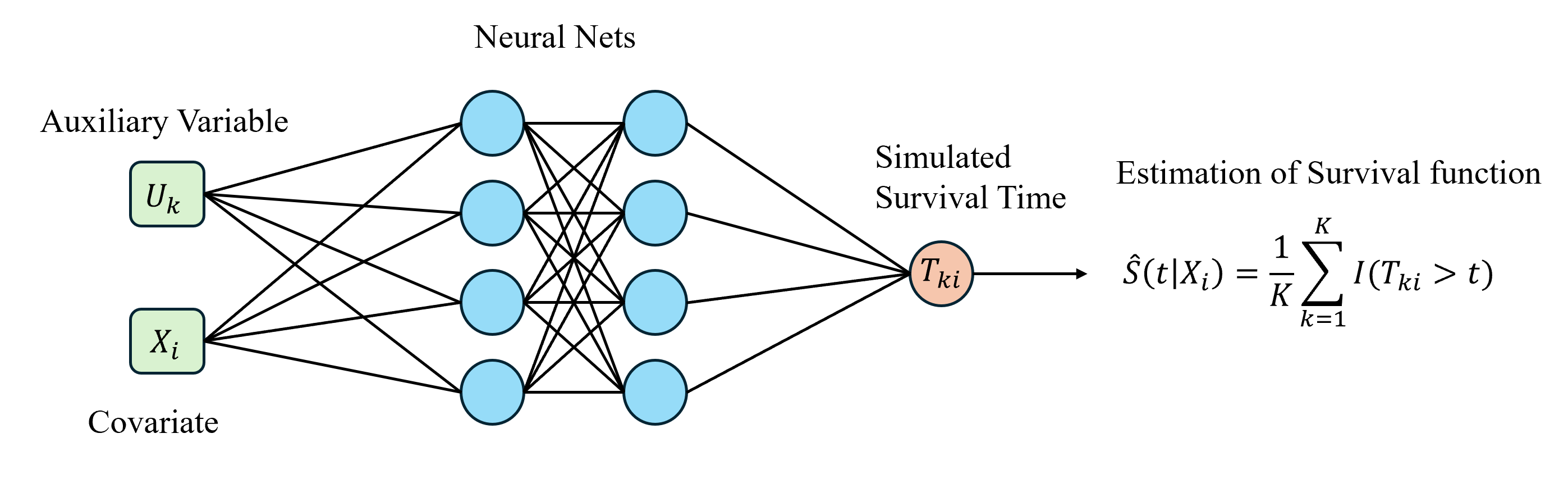}
    \caption{Illustration of the conditional generator used to approximate the survival function through generated samples.}
    \label{fig:congenerator}
\end{figure}

Similarly, we represent the weight functions $\phi \in \Phi_B$ by neural networks parameterized by $\zeta$ and denoted as $\phi_{\zeta} \in \Phi_\zeta^B$, where $\Phi_\zeta^B$ is a class of nonnegative neural network functions, bounded by $B$.
Lastly, we use positive activation functions, such as \textit{exp} and \textit{sigmoid}, at the output of both $G_{\omega}$ and $\phi_{\zeta}$, since survival times and $\phi_{\zeta}$ must be non-negative. For example, in the case of a Multi-Layer Perceptron (MLP) with $l$ layers for $G_{\omega}$, let $\omega = \{\omega^w_1, \omega^b_1, \dots, \omega^w_L, \omega^b_L\}$ represent the set of weights ($\omega_{\cdot}^w$) and biases ($\omega_{\cdot}^b$), and let $\rho$ denote the activation function for all layers except the last. As an example for $\rho$, we can use \textit{ReLU} or \textit{tanh} functions. Then, the output of the generator denoted as $h_L^{\omega}$ can be expressed as:
\[
\begin{split}
    h_1^{\omega} &= \omega^w_1 (u,x) + \omega^b_1, \\
    h_l^{\omega} &= \omega^w_l \rho(h_{l-1}^{\omega}) + \omega^b_l, \quad l=2, \dots, L-1, \\
    h_L^{\omega} &= \omega^w_l h_{L-1}^{\omega} + \omega^b_L,
\end{split}
\]
where $l$ denotes the number of hidden layers, $\omega^w_1 \in \mathbb{R}^{m_1 \times (p_u+p)}, \omega^w_l \in \mathbb{R}^{m_l \times m_{l-1}}$ for $l=2, \dots, L-1$, and $\omega^w_L \in \mathbb{R}^{1 \times m_{L-1}}$. Additionally, $\omega^b_l \in \mathbb{R}^{m_l}$ for $l=1, \dots, L-1$ and $\omega^b_L \in \mathbb{R}$. Similarly, let the output of $\phi_{\zeta}$ be denoted as $h_{L}^{\zeta}$. We use the following activation functions to ensure the appropriate support for both $h_L^{\omega}$ and $h_L^{\zeta}$:
\[
\begin{split}
    G_{\omega}(u,x) &= \exp(h_L^{\omega})\in [0, \infty), \\
    \phi_{\zeta}(x) &= \frac{1}{1+\exp(-h_L^{\zeta})} \in [0, 1].
\end{split}
\]

In conclusion, the function classes we consider for Equation (\ref{min-maxgame-empirical}) are defined as 

\begin{equation}\label{functionclass}
\begin{split}
    \mathcal{S}_{\omega} &= \left\{ S(\cdot|x) : S(\cdot|x) = \frac{1}{K} \sum_{k=1}^{K} I(T_{k}(x) > \cdot), \; T_{k}(x) = G_{\omega}(U_k, x) \right\}\\
    \Phi_{\zeta}^B &= \left\{\phi: \phi(x)=B\phi_{\zeta}(x) \right\}
    \end{split}
\end{equation}
where $\mathcal{S}_{\omega}$ represents the class of conditional survival functions derived from the generator $G_{\omega}$ and $\Phi_{\zeta}^B$ represents the class of bounded non-negative function that can be expressed with neural nets. The proposed SCENE estimation is then given by the following min-max optimization:
\begin{equation}\label{min-maxgame-empirical-NN}
\begin{split}
        \underset{S\in \mathcal{S}_{w}}{\min} &\ \underset{\phi_{\zeta}\in\Phi^B_{\zeta}}{\max} \  C_{M,N}.
\end{split}
\end{equation}

For the above min-max optimization, the training procedure follows the standard iterative stochastic gradient descent update for the conditional generator and the weight function $\phi_{\zeta}$. The training algorithm is outlined in Algorithm \ref{alg:scene}.

\begin{algorithm}[!htbp]
\begin{algorithmic}
 \STATE \textbf{Input}: total iteration number $H$, iteration for starting the variable selection $H_{VS}<H$, initialize the DNN weights $\omega^{(0)}$ and $\zeta^{(0)}$. Data $\mathcal{D} = \{\tilde{t}_i, \delta_i, x_i\}_{i=1}^N$, learning rate $\epsilon$; 
 
 \FOR{$h=1,2,\dots,H$}
        \STATE{ $\bullet$ Draw mini-batch with size $n$, denoted the by $\{(\tilde{t}_1,\delta_1,x_1),\dots,(\tilde{t}_n,\delta_n,x_n)$\}} \;
        \STATE{ $\bullet$ Draw $m$ time points, denoted by $t_1, \dots, t_m$, from $\{\tilde{t}_1, \dots, \tilde{t}_N\}$ through sampling without replacement} \;
  \FOR{$i=1,2,\dots,n$}
   \STATE{ $\bullet$ Draw $K$ auxiliary variables, denoted the by $U_1,\dots,U_K$} \;

   \STATE{ $\bullet$ Calculate survival function value for $t=\tilde{t}_1,\dots,\tilde{t}_n,t_1,\dots,t_m$} \;

    \[     
    S^{(h)}(t|x_i)=\frac{1}{K}\sum_{k=1}^K I(G_{\omega^{(h)}}(U_k,x_i)\geq t)
    \]  

\ENDFOR

\STATE{$\bullet$ Calculate loss} \;
\[
\begin{split}
   \tilde{C}(S^{(h)},\phi^{(h)})&=\frac{1}{m}\sum_{j=1}^m \{\tilde{L}(t_j,S^{(h)},\phi^{(h)})- \tilde{R}(t_j,S^{(h)},\phi^{(h)})\}^2 \\
     \tilde{L}(t,S^{(h)},\phi) &= \frac{1}{n}\sum_{i=1}^n S^{(h)}(t|x_i)\phi^{(h)}(x_i)\\
     \tilde{R}(t,S^{(h)},\phi)&=\frac{1}{n}\sum_{i=1}^n \{I(\tilde{t}_i>t)+\frac{I(\delta_i=0,\tilde{T}_i\leq t)}{S^{(h)}(\tilde{t}_i|x_i)}S^{(h)}(t|x_i)\}\phi^{(h)}(x_i)
\end{split}
\]

   \STATE{ $\bullet$ Update $S$: Take the gradient with respect $\omega$ on $\tilde{L}$ only, denoted as $\nabla_{\omega}^L$ } \;

   \[
   \omega^{(l+1)}=\omega^{(h)}-\epsilon\nabla_{\omega}^L \tilde{C}(S^{(h)},\phi^{(h)})
   \]

   \STATE{ $\bullet$ Update $\phi$(update $\zeta$) with updated $S^{(h+1)}$ with same proceduare at previous steps} \;

   \[
   \zeta^{(h+1)}=\zeta^{(h)}+\epsilon\nabla_{\zeta} \tilde{C}(S^{(h+1)},\phi^{(h)})
   \]
    \STATE{ {\bf if} variable selection is true and $h>H_{VS}$, {\bf then} calculate variable importance $\bgamma$ and threshold $\bar{\bgamma}_U := \frac{1}{p_u} \sum_{i=1}^{p_u} \bgamma_i$} by:

    \[\bgamma=|\omega_l^{w,(h+1)}|\times\dots\times|\omega_2^{w,(h+1)}|\times|\omega_1^{w,(h+1)}|\in \mathbb{R}^{1\times(p_u+p)}. 
    \]

    Then, find the index set $\mathcal{J} = \{ j : p_u + 1 \leq j \leq p_u + p, \, \bgamma_v \leq \bar{\bgamma}_U \}$ for variable selection. We set $\omega_1^w(i, \mathcal{J}) = 0$ for $i = 1, \dots, m_1$, where $\omega_1^w(i,j)$ refers to the $(i, j)$ component of $\omega_1^w$.
    
 \ENDFOR
 \caption{SCENE }\label{alg:scene} 
\end{algorithmic}
\end{algorithm}


Define $\hat{S}^\text{SCENE} = \arg \underset{S \in \mathcal{S}_{\omega}}{\min} \ \underset{\phi \in \Phi^B_{\zeta}}{\max} \ C_{M,N}(S, \phi)$. Next, we establish the convergence rate of the SCENE estimator to the true conditional survival function $S_T^*(t|x)$, as the sample size $N \to \infty$ and the number of evaluation time points $M \to \infty$. To establish the results, we need the following conditions.

\begin{assumption}[Lipshitz and Curvature]\label{asm:lip_curvarture}

  For $C(S)=\underset{\phi\in \Phi_B}{\max} C(S,\phi)$, there exists some constants $C_{\phi},C_S,c_{1,C},c_{2,C},c_{1,D},c_{2,D}$ for any $S_1,S_2,S \in \mathcal{S}$ and $\phi_1,\phi_2,\phi\in\Phi_B$, the following inequalities
hold:
    \[
    \begin{split}
    |l(t,x,S_1,\phi_1)-l(t,x,S_2,\phi_2)|&\leq C_{\phi}|\phi_1(x)-\phi_2(x)|+C_{S}|S_1(t,x)-S_2(t,x)|\\
        c_{1,C}\|S(t,x)-S^*(t,x)\|^2_{L_2(t, x)|_{\mathcal{V}}}&\leq C(S)-C(S^*)\leq c_{2,C}\|S(t,x)-S^*(t,x)\|^2_{L_2(t, x)|_{\mathcal{V}}} \\
    c_{1,D}\|S(t,x)-S^*(t,x)\|^2_{L_2(x)}&\leq D^I(t,S,\phi)-D^I(t,S^*,\phi) \leq c_{2,D}\|S(t,x)-S^*(t,x)\|^2_{L_2(x)}
        \end{split}
    \]
where let $S(t|x)$ denote $S(t,x)$ for readability.
\end{assumption}

\begin{assumption}[Assumption 2 from \citep{Farrell_2021}]\label{asm:sobolev} For $\phi_{\hat{S}} = \arg \underset{\phi \in \Phi_B}{\max} C_{M,N}(\hat{S}, \phi)$, $S^*,\phi_{\hat{S}}$ lie in the Sobolev ball $\mathcal{W}^{\beta,\infty}([-1,1]^{p+p_u})$ and $\mathcal{W}^{\beta,\infty}([-1,1]^p)$, with smoothness $\beta\in\mathbb{N}_+$, 

\[
\begin{split}
    S^*(\cdot|x)\in \mathcal{W}^{\beta,\infty}([-1,1]^{p+p_u})&:=\big\{S:\max_{\alpha,|\alpha|\leq \beta}\esssup_{x\in[-1,1]^p}|D^{\alpha} S^*(\cdot|x)|\leq 1 \big\}\\
\phi^*(x)\in \mathcal{W}^{\beta,\infty}([-1,1]^p)&:=\big\{S:\max_{\alpha,|\alpha|\leq \beta}\esssup_{x\in[-1,1]^p}|D^{\alpha} \phi^*(x)|\leq 1 \big\}
\end{split}
\]
where $\alpha=(\alpha_1,\dots,\alpha_p)$, $|\alpha|=\alpha_1+\dots+\alpha_p$ and $D^{\alpha}f$ is the weak derivative.
\end{assumption}

Assumption \ref{asm:lip_curvarture} specifies curvature conditions for the loss function used in our framework. These conditions are natural and widely assumed in many contexts \citep{farrell2020DeepLearning,Farrell_2021}. Assumption \ref{asm:sobolev} imposes regularity properties on the functions to be approximated, which are widely used in the literature \citep{DeVore1989OptimalNA,yarotsky2017,Farrell_2021}. In our case, these conditions apply to the true survival functions and weight functions.

\begin{theorem}\label{thm:consistency} Suppose Assumptions \ref{asm:lip_curvarture} and \ref{asm:sobolev} hold. And assume $W_{S}, U_S \asymp N^{(p + p_u) / (2\beta + p + p_u)}$, $W_{\phi}, U_{\phi} \asymp N^{p / (2\beta + p)}$, and depth $L_S, L_{\phi} \asymp \log N$, where $W_{\cdot}, U_{\cdot}, L_{\cdot}$ denote the number of parameters, hidden units, and layer size for the function class of $\cdot$, parameterized in a multilayer perceptron and $\beta\in\mathbb{N}_+$ for smoothness constant of the true conditional survival function. Then, if $M = O(N)$, with probability at least $1 - \exp\left(-N^{\frac{1}{4}} M^{\frac{1}{4}}\right)$, the following result holds: 
\[
\|\hat{S}^{SCENE}-S^*\|^2_{{L_2(t,x)}|_{\mathcal{V}}}\leq C(K^{-1}N^{-\rho_1}+N^{-\rho_2}+M^{-1/2}+K^{-2}) 
\]
for some constant $C$ that does not depend on $N,M$ or $K$, and $\rho_1=\min\left(\frac{1}{4}, \frac{\beta}{2\beta + p + p_u} \right)$, $\rho_2 = \min\left(\frac{1}{4} + \frac{\beta}{2\beta + p + p_u}, \frac{2\beta}{2\beta + p + p_u}, \frac{\beta}{2\beta + p}\right)$, where $L_2(t, x)|_{\mathcal{V}}$ is the $L_2$ norm with respect to $t$ and $x$, and $\|f(t|x) - g(t|x)\|^2_{L_2(t, x)|_{\mathcal{V}}} = \int_{t\in \mathcal{V},x\in\mathcal{X}} \{f(t|x) - g(t|x)\}^2 \, dx \, dt $. 
\end{theorem}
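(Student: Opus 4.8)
The plan is to reduce everything to a bound on the excess population risk $C(\hat S^{\text{SCENE}})-C(S^*)$ and then read off the $L_2$ bound from the lower curvature inequality of Assumption~\ref{asm:lip_curvarture}. Since $C(S^*)=\max_{\phi}C(S^*,\phi)=0$ by Theorem~\ref{thm:1}, that inequality gives $c_{1,C}\|\hat S^{\text{SCENE}}-S^*\|^2_{L_2(t,x)|_{\mathcal V}}\le C(\hat S^{\text{SCENE}})$, so it suffices to control $C(\hat S^{\text{SCENE}})$. I would decompose it, writing $C(\cdot)=\max_\phi C(\cdot,\phi)$ and $\widehat C_{M,N}(\cdot)=\max_\phi C_{M,N}(\cdot,\phi)$, as (i) the population/empirical gap $C(\hat S^{\text{SCENE}})-\widehat C_{M,N}(\hat S^{\text{SCENE}})$; (ii) the optimization gap $\widehat C_{M,N}(\hat S^{\text{SCENE}})-\widehat C_{M,N}(S_0)$, which is $\le 0$ because $\hat S^{\text{SCENE}}$ minimizes $\widehat C_{M,N}$ over $\mathcal S_\omega$; (iii) the gap $\widehat C_{M,N}(S_0)-C(S_0)$ at a well-chosen near-minimizer $S_0\in\mathcal S_\omega$; and (iv) the approximation term $C(S_0)$, which the upper curvature bound controls by $c_{2,C}\|S_0-S^*\|^2_{L_2(t,x)|_{\mathcal V}}$.

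For (i) and (iii) I would prove a uniform concentration statement $\sup_{S\in\mathcal S_\omega}\bigl|\widehat C_{M,N}(S)-C(S)\bigr|\lesssim \varepsilon_N+M^{-1/2}+(\text{Monte Carlo terms in }K)$ holding with the stated probability. This has several ingredients. First, I control $\sup_{S,\phi}|C_{M,N}(S,\phi)-C(S,\phi)|$ over the product class $\mathcal S_\omega\times\Phi^B_\zeta$; the squared-functional structure $D^I=(E_X[\cdot])^2$, $C=E_V[D^I]$ is handled by the identity $\hat A^2-A^2=(\hat A-A)(\hat A+A)$ together with boundedness of $\hat A+A$, which follows from $\phi\le B$, $0\le S\le 1$, and the fact that on the bounded set $\mathcal V$ the generator output keeps $S$ bounded away from zero so that the nonlinear term $1/S(\tilde t_i|x_i)$ is controlled. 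Second, the average over the $M$ i.i.d.\ points $V_m$ contributes the $M^{-1/2}$ term by a standard Bernstein/Hoeffding argument. Third, the average over the $N$ data points is handled by a Dudley entropy bound for the neural-network classes, whose covering-number exponents scale with $W_S\log N$ and $W_\phi\log N$; substituting $W_S\asymp N^{(p+p_u)/(2\beta+p+p_u)}$ and $W_\phi\asymp N^{p/(2\beta+p)}$ produces the $N^{-\rho}$ exponents, and the $N^{1/4}M^{1/4}$ in the probability exponent reflects truncating a sub-exponential tail at this scale — I expect this truncation is also the source of the $\tfrac14$ appearing inside $\rho_1$ and $\rho_2$. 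Here the curvature bound on $D^I$ in $\phi$ (Assumption~\ref{asm:lip_curvarture}) is used to localize the maximizing $\phi$ to a neighborhood of $\phi^*$, so the complexity term is taken over a shrinking ball rather than all of $\Phi^B_\zeta$; the Lipschitz bound on $l$ links fluctuations of $\phi$ and $S$ to fluctuations of the loss.

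The Monte Carlo error from representing $S(t|x)=\frac1K\sum_k I(G_\omega(U_k,x)>t)$ with finite $K$ enters through its deviation from the conditional expectation $\bar S_\omega(t|x)=P(G_\omega(U,x)>t\mid x)$. The linear-in-$S$ terms of $C_{M,N}$ are unbiased, so their $K$-fluctuation has variance $O(1/K)$, which after the $N$-average and localization yields the $K^{-1}N^{-\rho_1}$ term; the nonlinear term $1/S(\tilde t_i|x_i)$ I would expand as $1/\bigl(\bar S_\omega+(S^{(K)}-\bar S_\omega)\bigr)$, whose second-order Taylor remainder contributes an $O(1/K)$ bias and whose squared first-order term contributes $O(1/K^2)$, giving the $K^{-2}$ term — again using that $\bar S_\omega$ (hence $S^{(K)}$ with high probability) is bounded below on $\mathcal V$. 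Finally, for (iv) I invoke neural-network approximation under Assumption~\ref{asm:sobolev} (in the style of Farrell~2021 / Yarotsky): there is $S_0\in\mathcal S_\omega$ with $\|\bar S_0-S^*\|_\infty\lesssim W_S^{-\beta/(p+p_u)}\asymp N^{-\beta/(2\beta+p+p_u)}$ and an approximating discriminator with error $\asymp N^{-\beta/(2\beta+p)}$; the upper curvature bound then gives $C(S_0)\lesssim N^{-2\beta/(2\beta+p+p_u)}$, the discriminator error feeds in through the max over $\Phi^B_\zeta$ and yields the $N^{-\beta/(2\beta+p)}$ component of $\rho_2$, and the cross term between the $\tfrac14$ concentration rate and the $S$-approximation rate yields the $\tfrac14+\beta/(2\beta+p+p_u)$ component. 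Collecting all contributions and keeping the worst exponent gives the claimed bound. I expect the main obstacle to be the uniform-concentration step over the \emph{product} of the growing generator and discriminator classes while simultaneously tracking the $1/S$ nonlinearity and the $K$-sample fluctuation: in particular, making the $D^I$-curvature localization of $\phi$ rigorous so the empirical-process bound does not inflate, and balancing $N$, $M$, $K$ so the exponents come out exactly as stated rather than lossier.
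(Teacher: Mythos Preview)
Your proposal is correct in outline and would lead to the stated bound, but it takes a more elaborate route than the paper. Both arguments start from the lower-curvature inequality $c_{1,C}\|\hat S-S^*\|^2_{L_2(t,x)|_{\mathcal V}}\le C(\hat S)-C(S^*)$ and decompose the excess risk, but the paper introduces \emph{two} empirical maxima, $C_{M,N}(S)=\max_{\phi\in\Phi_B}C_{M,N}(S,\phi)$ and $\bar C_{M,N}(S)=\max_{\phi\in\Phi_\zeta^B}C_{M,N}(S,\phi)$, and writes
\[
C(\hat S)-C(S^*)\le \underbrace{[C(\hat S)-C(S^*)]-[C_{M,N}(\hat S)-C_{M,N}(S^*)]}_{(I)}+\underbrace{C_{M,N}(\hat S)-\bar C_{M,N}(\hat S)}_{(II)}+\underbrace{\bar C_{M,N}(\bar S)-\bar C_{M,N}(S^*)}_{(IV)},
\]
with a third term $(III)=\bar C_{M,N}(S^*)-C_{M,N}(S^*)\le 0$ dropped. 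This cleanly isolates the discriminator-approximation error $(II)$, which is then bounded by $C_\phi\epsilon_\phi$ via the Lipschitz part of Assumption~1 alone; in your single-objective decomposition that error is entangled with the concentration step, which is what drives you toward the $\phi$-localization argument. For the stochastic pieces the paper does \emph{not} use Dudley or covering-number bounds: term $(I)$ is handled by symmetrization with the Rademacher complexity of the bounded loss class, and term $(IV)$ by applying Bernstein twice (first over $Z_1,\dots,Z_N$, then over $V_1,\dots,V_M$) at the \emph{fixed} pair $(\bar S,S^*)$, the Lipschitz bound supplying the variance $\lesssim C_S^2\epsilon_S^2$. The $K$-Monte-Carlo error is folded into $\epsilon_S$ by writing $\epsilon_S\le\|\bar S-S\|_\infty+\|S-S^*\|_\infty$ and invoking Glivenko--Cantelli for the first piece ($O(K^{-1})$), rather than your Taylor expansion of $1/S$. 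Finally, the paper sets the Bernstein parameter $\gamma=O(N^{1/4}M^{1/4})$, which yields both the stated probability and the $\tfrac14$ contributions inside $\rho_1,\rho_2$; the Yarotsky/Farrell approximation rates then give the remaining exponents exactly as you anticipated. Your entropy-based route is viable but heavier: the paper's two-objective decomposition plus fixed-point Bernstein sidesteps any uniform-in-$(S,\phi)$ entropy control and any curvature-based localization of $\phi$.
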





\subsection{Variable importance }\label{subsection:variableselection}

The SCENE method can be enhanced by incorporating important predictor identification through network structure selection \citep{SunSLiang2021}. Similar to random survival forests, which report Variable Importance (VIMP), SCENE can also provide analogous values to rank the importance of variables. However, unlike random survival forests, which calculate these values only after training -- rendering them unavailable during the training process -- SCENE can efficiently compute these values in real-time, enabling their integration into model optimization.

First, we consider the network structure $\bgamma$ as in \cite{SunSLiang2021}:
\[
\bgamma=|\omega_L^w|\times\dots\times|\omega_2^w|\times|\omega_1^w|\in \mathbb{R}^{1\times(p_u+p)},
\]
where $|\omega_{\cdot}^w|$ represents the matrix of absolute values, with each element being the absolute value of the corresponding element in $\omega_{\cdot}^w$, and $\times$ represents the matrix multiplication. We can interpret $\bgamma$ as the total weights assigned to each variable, serving as a measure of the variable's importance to the network. A larger value of $\bgamma_j$ indicates that the $j$th variable is more important to the network. Conversely, if $\bgamma_j = 0$, the $j$th variable is not important and has no effect on the neural network, thus network structure can be used for variable selection. To train sparse neural network model and perform variable selection, \cite{SunSLiang2021} introduced a Gaussian mixture prior on the weights, defined as $\pi(\omega) \sim \lambda N(0, \sigma_{1}^2) + (1 - \lambda) N(0, \sigma_{0}^2)$, where $\lambda$, $\sigma_0$, and $\sigma_1$ are hyperparameters, with $\sigma_0$ being relatively small compared to $\sigma_1$. Then by solving the inequality $P(\omega\sim N(0,\sigma_0^2))\leq P(\omega \sim N(0,\sigma_1^2))$, they specified the threshold to construct $\tilde{\omega}$ as shown below:
\[
\tilde{\omega} = \omega I \left\{
 |\omega| \geq \sqrt{\log\left(\frac{1 - \lambda}{\lambda} \frac{\sigma_{1}}{\sigma_{0}}\right) \frac{2 \sigma_{0}^2 \sigma_{1}^2}{\sigma_{1}^2 - \sigma_{0}^2}}\right\},
\]
which represents the variable-inclusion weights of the neural network.
Using $\tilde{\omega}$, they computed $\bgamma$, which can be used for variable selection. 

Note that $\lambda$, $\sigma_0$, and $\sigma_1$ are hyperparameters that impact the determination of the network structure. However, SCENE does not require such hyperparameters, as it assumes that $\bgamma_{1:p_u} = (\bgamma_1, \bgamma_2, \dots, \bgamma_{p_u})$, corresponding to the auxiliary variables $U_1,\dots,U_{p_u}$, are effective in generating survival times conditional on covariates. Thus, the importance of the $j$th covariate, $j=1,\dots,p$, can be evaluated by comparing $\bgamma_{p_u+j}$ to the average $\frac{1}{p_u}\sum_{i=1}^{p_u}\bgamma_i$. Here, 
 the average $\frac{1}{p_u}\sum_{i=1}^{p_u}\bgamma_i$ can be regarded as the baseline measure of importance for constructing the conditional generator for survival times without using covariate information. Specifically, if $\bgamma_{p_u+j} > \frac{1}{p_u}\sum_{i=1}^{p_u}\bgamma_i$, then the $j$th covariate is important for constructing the conditional generator for survival times. Consequently, including only those variables with importance values exceeding the average importance value of the auxiliary variables during the SCENE training process can enhance the performance of training through variable selection, especially when there is a large number of predictors. Such a variable selection procedure can be easily incorporated into the training process as follows:
\begin{enumerate} \item After a burn-in period, calculate $\bgamma \in \mathbb{R}^{p_u + p}$ and compute the threshold $\bar{\bgamma}_U := \frac{1}{p_u} \sum_{i=1}^{p_u} \bgamma_i$. \item Identify the index set $\mathcal{J} = \{ j : p_u + 1 \leq j \leq p_u + p, \bgamma_v \leq \bar{\bgamma}_U \}$ and for all $i = 1, \dots, m_1$, set $\omega_1^w(i, \mathcal{J}) = 0$, where $\omega_1^w(i,j)$ refers to the $(i, j)$ component of $\omega_1^w$. \end{enumerate}

\section{Simulations}

In this section, we evaluate the performance of SCENE using simulated survival data across various settings. Specifically, we consider combinations of the following factors: (1) model, (2) covariate dimensionality, and (3) censoring rate.

We consider the proportional hazards (PH) model and the proportional ddds (PO) model. The PH model has the following conditional survival function:
\[
    S(t|X_i) = e^{-\lambda \exp\{f(X_i)\}t}, 
\]
where $f(x_i) = -(x_{i,1}^2 + x_{i,2}^2)/(2r^2)$ with $\lambda = \log(0.1)$ and $r = 0.7$. The PO model has the following conditional survival function:
\[
    S(t|X_i) = \frac{1}{1 + t\exp\{f(X_i)\}}, 
\]
where $f(x_i) = -(x_{i,1}^2 + x_{i,2}^2)/(2r^2)$ with $r = 0.5$.

For the dimension of covariates, we considered both low-dimensional and high-dimensional cases. The covariates $X_{ij}$ were independently sampled from a uniform distribution $U[-1, 1]$ for $i = 1, \dots, N$ and $j = 1, \dots, p$, with $p = 5$ for the low-dimensional case and $p = 100$ for the high-dimensional case. 

Lastly, we considered moderate and high censoring rates, approximately $20\%$ and $50\%$, respectively. The censoring times were generated from a uniform distribution $U[0, \tau]$, where $\tau$ was chosen to control the censoring rate. 
The resulting average censoring rates for different values of $\tau$ are summarized in Table \ref{tab:cenratio}.



\begin{table}[!htbp]
\caption{Average censoring ratios over 100 datasets for different $\tau$ values at $p=5$ and $p=100$}
\label{tab:cenratio}
\vspace{-0.2in}
\begin{center}
\begin{tabular}{cc c cc } \toprule
       \multicolumn{2}{c}{PH Model} & & \multicolumn{2}{c}{PO Model} 
        \\ \cline{1-2} \cline{4-5}    
 $\tau$=5 & $\tau$=19 &  &  $\tau$=5 & $\tau$=35
 \\ \midrule
$52.98\%$ & $20.49\%$ & & $53.10\%$ & $20.36\%$ \\
\bottomrule  
\end{tabular}
\end{center}
\end{table}

        
    For each of the 8 scenarios, we generated 100 datasets of size $N = 4000$. For each dataset, we applied DeepSurv \citep{Katzman2018dsurv}, random survival forests \citep{strobl2007biasrsf}, and SCENE to estimate the conditional survival functions. The implementation details for SCENE are provided in the supplementary material.

    The performance of SCENE was evaluated from two perspectives: (1) bias and variability of the estimates and (2) distributional properties of the survival times generated by SCENE. To assess bias and variability, we selected four fixed test individuals, each with covariates  $x^{\text{test}}_{i}$ for $i = 1, \ldots, 4$. Each covariate of the first individual was randomly sampled from its support, i.e., $x^{\text{test}}_{1j} \sim \text{Unif}[-1, 1]$ for $j = 1, \ldots, p$. For the second, third, and fourth test individuals, their covariates were set as $x_{2j}^{\text{test}} = 0.25$,  $x_{3j}^{\text{test}} = 0.5$, $x_{4j}^{\text{test}} = 0.75$, $j=1,\dots,p$, such that the corresponding risk scores $f(x_i^{\text{test}})$, $i=2, 3, 4$, equal to the 9.8\%, 39.2\%, and 85.5\% quantiles of the risk scores $f(X)$, respectively. Then, for each time point $t \in [0, \tau]$, we computed $\hat{S}^{\cdot, l}(t | x^{\text{test}}_i)$, where $l = 1, \dots, 100$ denotes the estimate derived from the $l$th dataset for a given scenario, and $\cdot$ represents the specific estimator, such as DeepSurv, random survival forests, or SCENE. From these estimates, we derived the 5\% and 95\% quantiles of predicted survival probabilities, $\{\hat{S}^{\cdot, l}(t | x^{\text{test}}_i)\}_{l=1}^{100}$, referred to as the 90\% pointwise \textit{empirical bound} for the predicted conditional survival function given a test individual's covariates, which provides an empirical measure of prediction variability. Additionally, we assessed the bias of the conditional survival function estimates by comparing the average of $\{\hat{S}^{\cdot, l}(t | x^{\text{test}}_i)\}_{l=1}^{100}$ with the true survival function $S(t | x^{\text{test}}_i)$.

    We used a QQ-plot to evaluate whether the samples generated by SCENE share the same distributional properties as the true survival times. First, we defined evenly spaced quantiles $0 = q_0 < q_1 < \ldots < q_Q = 1$, where $q_1 = 0.01, q_2 = 0.02, \ldots, q_Q = 1$. Let $F_T|X$ and $F_{\hat{S}^l}|X$ denote the cumulative distribution functions of survival time given $X$ for the true survival times and those generated by SCENE, trained on the $l$th dataset for $l=1,\dots,100$, respectively. The QQ-plot compares these quantiles by scatter plotting $(F^{-1}_T(q_i|X), F^{-1}_{\hat{S}^l}(q_i|X))$ for $i = 0, \ldots, Q$, where $F^{-1}_{\hat{S}^l}(q_i|X)$ represents the $q_i$-quantile of the generated samples $\{G_{\hat{\omega}}(U_k, X)\}_{k=1}^K$. As with the bias and variability assessment, empirical bounds were constructed for the QQ-plot. This plot provides a way to evaluate SCENE’s ability to
    generate survival times, complementing its capacity to estimate survival functions. 

     We present results for the high censoring rate case in the main paper, as it represents a more challenging scenario. Results for the moderate-censoring rate case are provided in the supplementary material.

    The evaluation results from these two perspectives across various scenarios are summarized in Figures \ref{fig:phc5n4000d5estimate} to \ref{fig:poc5n4000d100qq}. Figure \ref{fig:phc5n4000d5estimate} (PH model, low-dimensional), Figure \ref{fig:poc5n4000d5estimate} (PO model, low-dimensional), Figure \ref{fig:phc5n4000d100estimate} (PH model, high-dimensional), and Figure \ref{fig:poc5n4000d100estimate} (PO model, high-dimensional) show estimates of conditional survival functions from 100 datasets on for four test subjects for SCENE, Deepsurv and random survival forests. Figure \ref{fig:phc5n4000d5qq} (PH model, low-dimensional), Figure \ref{fig:poc5n4000d5qq} (PO model, low-dimensional), Figure \ref{fig:phc5n4000d100qq} (PH model, high-dimensional), and Figure \ref{fig:poc5n4000d100qq} (PO model, high-dimensional) present QQ plots comparing the true survival times to the generated survival times from SCENE.



    In the low-dimensional case under the PH model, Figure \ref{fig:phc5n4000d5estimate} demonstrates that all methods effectively estimated the conditional survival functions for all test subjects, as their empirical bounds covered the true survival function and their average estimates closely aligned with the true values. Notably,  SCENE achieved the narrowest empirical bounds across all test subjects. 
        
\begin{figure}[!htbp]
    \centering
    \begin{subfigure}{.5\textwidth}
        \centering
        \includegraphics[width=\linewidth]{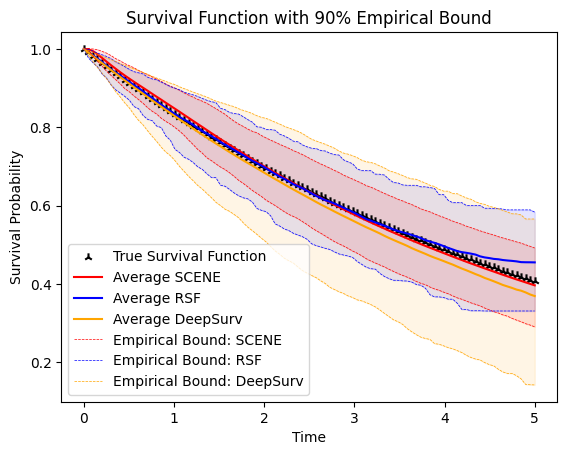}
        \subcaption*{(a) Subject 1: Random sampling}
    \end{subfigure}%
    \begin{subfigure}{.5\textwidth}
        \centering
        \includegraphics[width=\linewidth]{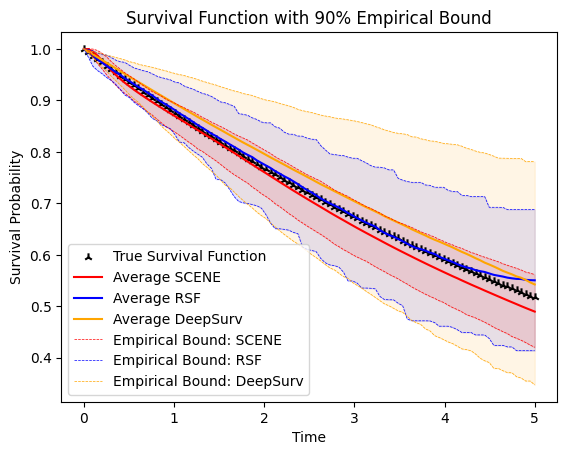}
        \subcaption*{(b) Subject 2: $9.8\%$ Quantile of risk score}
    \end{subfigure}
    \begin{subfigure}{.5\textwidth}
        \centering
        \includegraphics[width=\linewidth]{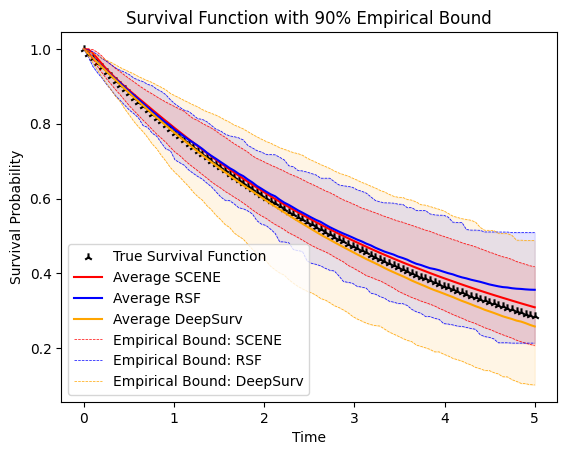}
        \subcaption*{(c) Subject 3: $39.2\%$ Quantile of risk score}
    \end{subfigure}%
    \begin{subfigure}{.5\textwidth}
        \centering
        \includegraphics[width=\linewidth]{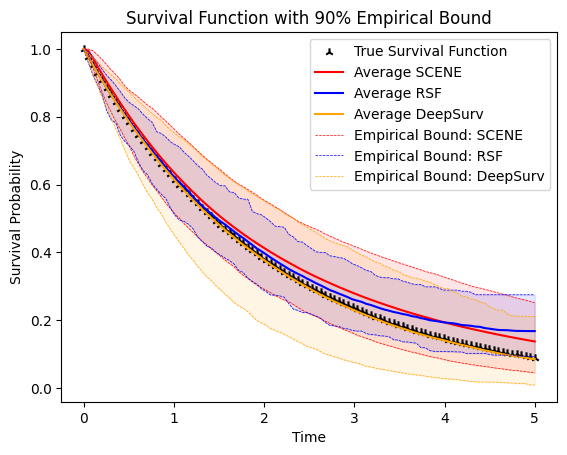}
        \subcaption*{(d) Subject 4: $85.5\%$ Quantile of risk score}
    \end{subfigure}
    \caption{Comparison of conditional survival function estimation for PH Model, $C=5$, $N=4000$, $d=5$: $(5\%, 95\%)$ empirical band for Test Subject 1 to Test Subject 4.}
    \label{fig:phc5n4000d5estimate}
\end{figure}

In the low-dimensional case under the PO model, Figure \ref{fig:poc5n4000d5estimate} illustrates that random survival forests exhibited noticeable bias near the end of the observed survival time for Test subject 1, while SCENE showed some bias for Test subject 3. Overall, SCENE achieved the narrowest empirical bounds for Test Subjects 1 through 4, consistent with its performance under the PH model.

\begin{figure}[!htbp]
    \centering
    \begin{subfigure}{.5\textwidth}
        \centering
        \includegraphics[width=\linewidth]{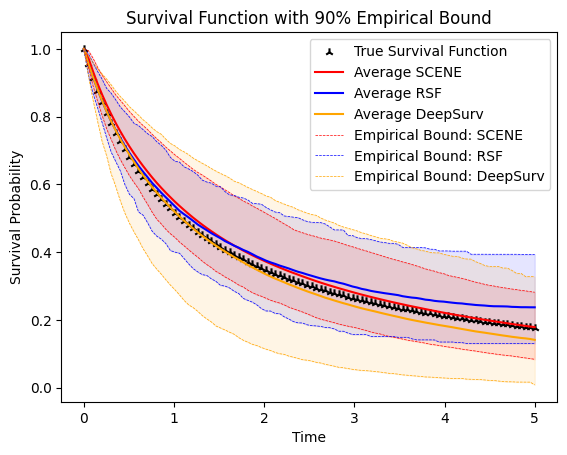}
        \caption*{(a) Subject 1: Random sampling}
    \end{subfigure}%
    \begin{subfigure}{.5\textwidth}
        \centering
        \includegraphics[width=\linewidth]{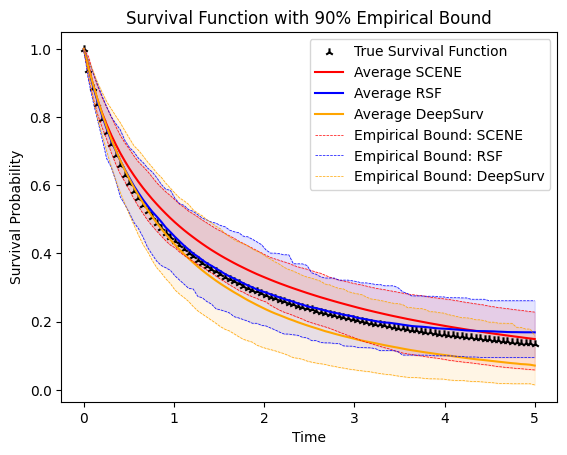}
        \caption*{(b) Subject 2: $9.8\%$ Quantile of risk score}
    \end{subfigure}
    \begin{subfigure}{.5\textwidth}
        \centering
        \includegraphics[width=\linewidth]{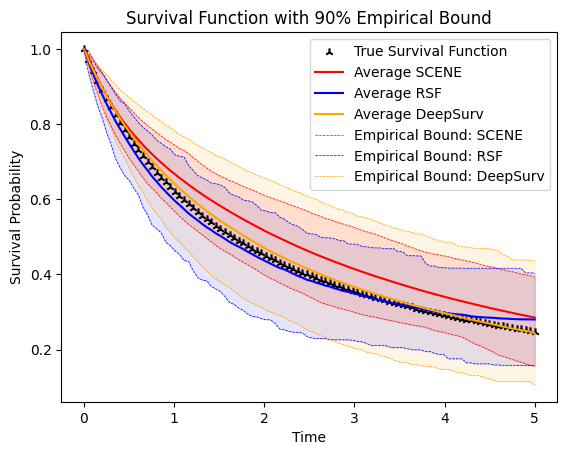}
        \caption*{(c) Subject 3: $39.2\%$ Quantile of risk score}
    \end{subfigure}%
    \begin{subfigure}{.5\textwidth}
        \centering
        \includegraphics[width=\linewidth]{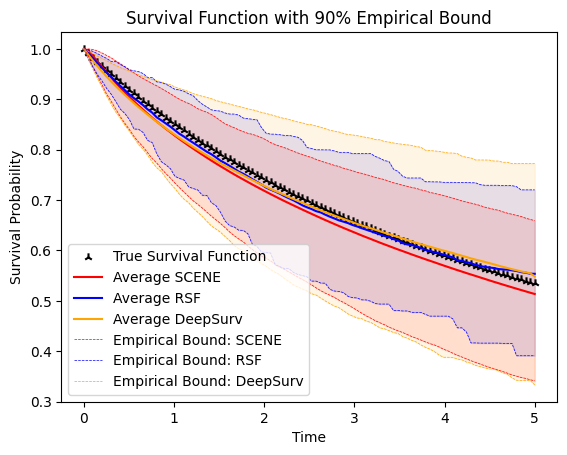}
        \caption*{(d) Subject 4: $85.5\%$ Quantile of risk score}
    \end{subfigure}
    \caption{Comparison of conditional survival function estimation for PO Model, C=5, $N=4000$, $d=5$ : $(5\%,95\%)$ empirical bound for Test Subject 1 to Subject 4. }
        \label{fig:poc5n4000d5estimate}
        \vspace{-8mm}
\end{figure}

\newpage

    As the next step, we considered a more complex scenario with high-dimensional covariates. The estimation results are shown in Figure \ref{fig:phc5n4000d100estimate} for the PH model and Figure \ref{fig:poc5n4000d100estimate} for the PO model. Among the methods evaluated, DeepSurv performed the worst, exhibiting severe bias and wide empirical bounds due to its lack of a variable selection mechanism, which limits its ability to handle high-dimensional data. Random survival forests showed narrower empirical bounds but exhibited consistent bias across all test subjects. In contrast, SCENE generally achieved narrow empirical bounds that included the true survival probability values, with moderate bias observed for Test Subjects 1 and 3.

\begin{figure}[!htbp]
    \centering
    \begin{subfigure}{.5\textwidth}
        \centering
        \includegraphics[width=\linewidth]{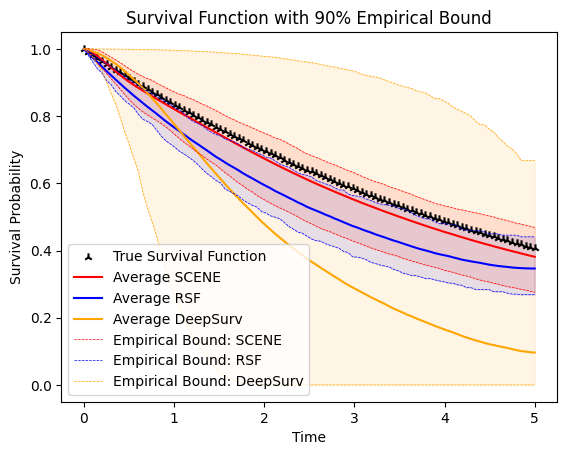}
        \caption*{(a) Subject 1: Random sampling}
    \end{subfigure}%
    \begin{subfigure}{.5\textwidth}
        \centering
        \includegraphics[width=\linewidth]{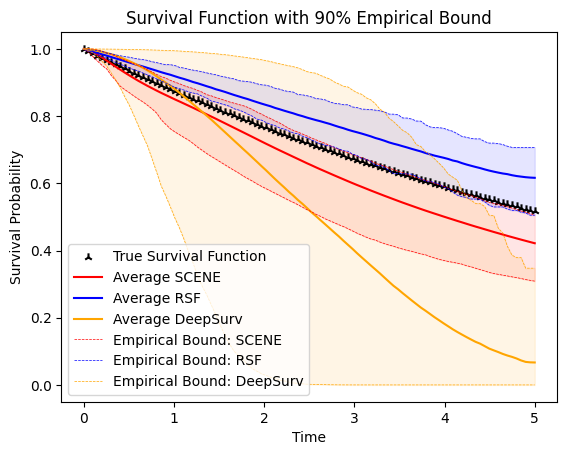}
        \caption*{(b) Subject 2: $9.8\%$ Quantile of risk score}
    \end{subfigure}
    \begin{subfigure}{.5\textwidth}
        \centering
        \includegraphics[width=\linewidth]{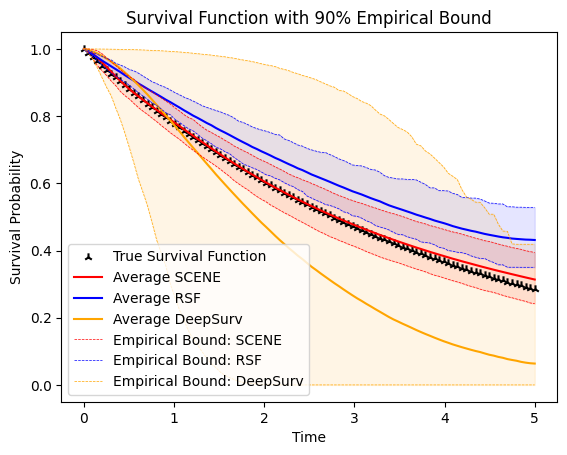}
        \caption*{(c) Subject 3: $39.2\%$ Quantile of risk score}
    \end{subfigure}%
    \begin{subfigure}{.5\textwidth}
        \centering
        \includegraphics[width=\linewidth]{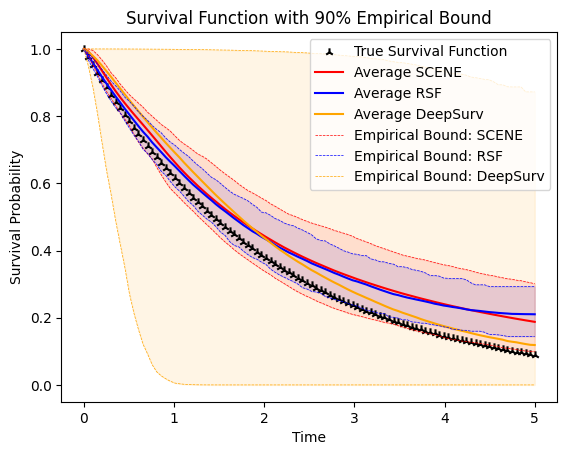}
        \caption*{(d) Subject 4: $85.5\%$ Quantile of risk score}
    \end{subfigure}
    \caption{Comparison of conditional survival function estimation for PH Model, C=5, $N=4000$, $d=100$ : $(5\%,95\%)$ empirical bound for Test Subject 1 to Subject 4.}
    \label{fig:phc5n4000d100estimate}
\end{figure}

\begin{figure}[!htbp]
    \centering
    \begin{subfigure}{.5\textwidth}
        \centering
        \includegraphics[width=\linewidth]{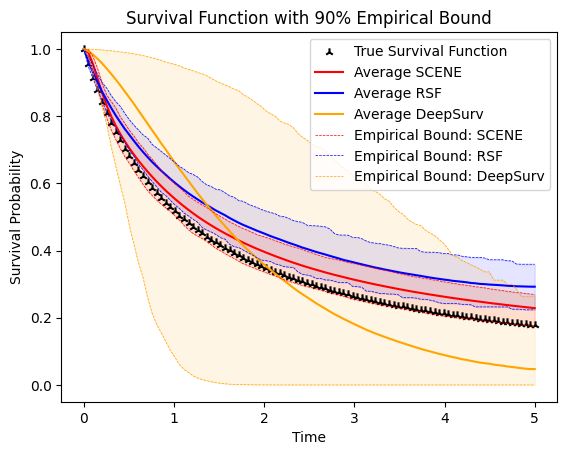}
        \caption{Subject 1: Random sampling}
    \end{subfigure}%
    \begin{subfigure}{.5\textwidth}
        \centering
        \includegraphics[width=\linewidth]{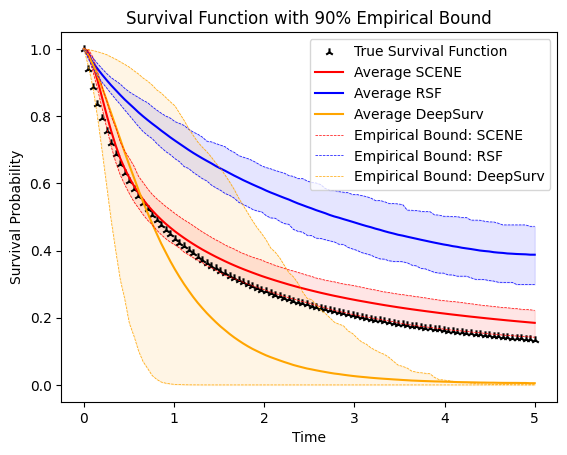}
        \caption{Subject 2: $9.8\%$ Quantile of risk score}
    \end{subfigure}
    \begin{subfigure}{.5\textwidth}
        \centering
        \includegraphics[width=\linewidth]{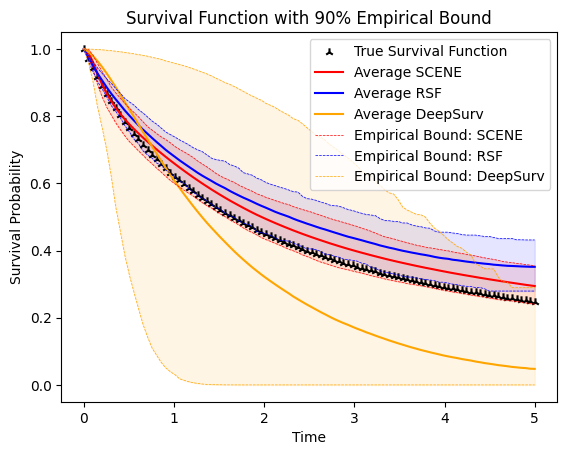}
        \caption{Subject 3: $39.2\%$ Quantile of risk score}
    \end{subfigure}%
    \begin{subfigure}{.5\textwidth}
        \centering
        \includegraphics[width=\linewidth]{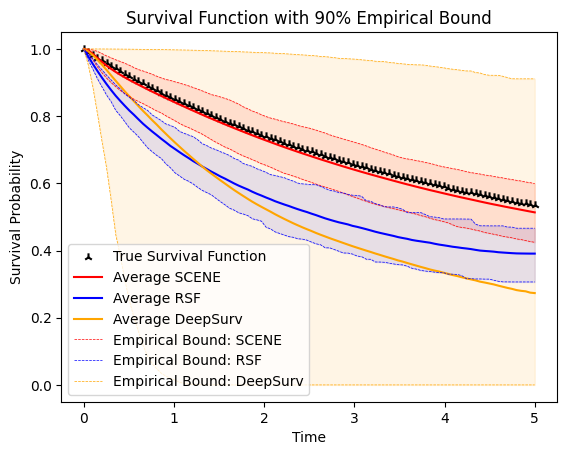}
        \caption{Subject 4: $85.5\%$ Quantile of risk score}
    \end{subfigure}
    \caption{Comparison of conditional survival function estimation for PO Model, C=5, $N=4000$, $d=100$ : $(5\%,95\%)$ empirical bound for Test Subject 1 to Subject 4.}
    \label{fig:poc5n4000d100estimate}
        \vspace{-8mm}
\end{figure}

\newpage

Additionally, Figures \ref{fig:phc5n4000d5qq}, \ref{fig:poc5n4000d5qq}, \ref{fig:phc5n4000d100qq}, and \ref{fig:poc5n4000d100qq} present QQ-plots demonstrating that the generated survival times closely aligned with the distributional properties of the true survival times across all scenarios.

\begin{figure}[!htbp]
    \centering
    \begin{subfigure}{.5\textwidth}
        \centering
        \includegraphics[width=\linewidth]{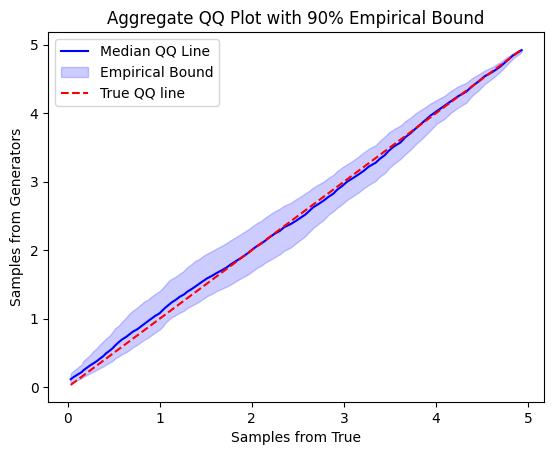}
        \caption{Subject 1: Random sampling}
    \end{subfigure}%
    \begin{subfigure}{.5\textwidth}
        \centering
        \includegraphics[width=\linewidth]{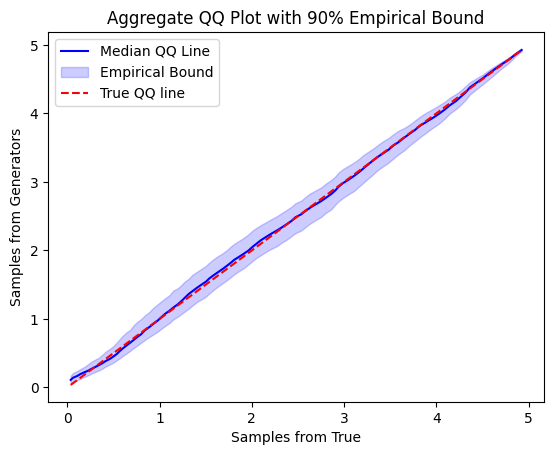}
        \caption{Subject 2: $9.8\%$ Quantile of risk score}
    \end{subfigure}
    \begin{subfigure}{.5\textwidth}
        \centering
        \includegraphics[width=\linewidth]{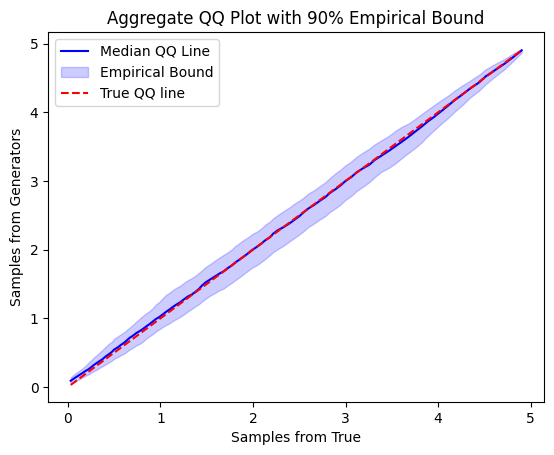}
        \caption{Subject 3: $39.2\%$ Quantile of risk score}
    \end{subfigure}%
    \begin{subfigure}{.5\textwidth}
        \centering
        \includegraphics[width=\linewidth]{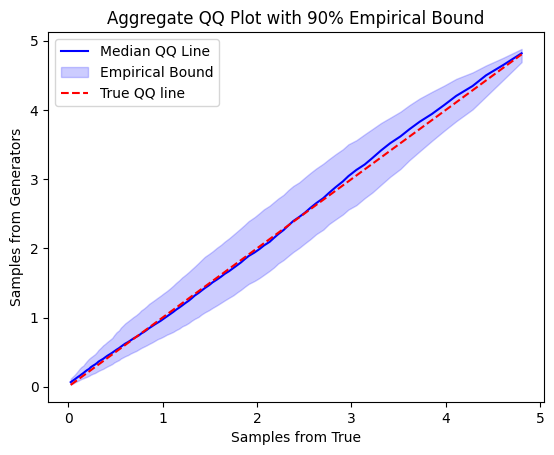}
        \caption{Subject 4: $85.5\%$ Quantile of risk score}
    \end{subfigure}
    \caption{QQ plot of true conditional samples ($x$-axis) and generated samples ($y$-axis) for PH Model, C=5, $N=4000$, $d=5$ : $(5\%,95\%)$ empirical bound for Test Subject 1 to Subject 4.}
    \label{fig:phc5n4000d5qq}
\end{figure}

\begin{figure}[!htbp]
    \centering
    \begin{subfigure}{.5\textwidth}
        \centering
        \includegraphics[width=\linewidth]{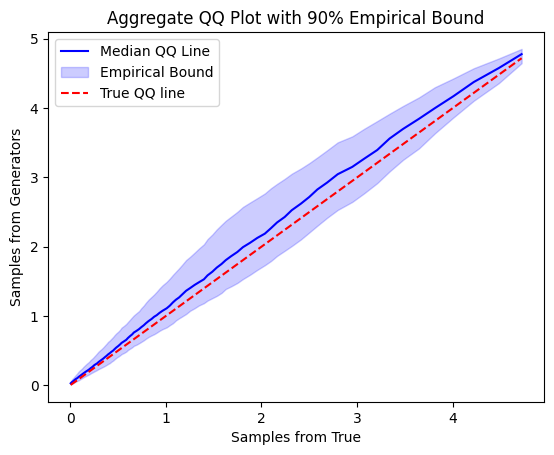}
        \caption{Subject 1: Random sampling}
    \end{subfigure}%
    \begin{subfigure}{.5\textwidth}
        \centering
        \includegraphics[width=\linewidth]{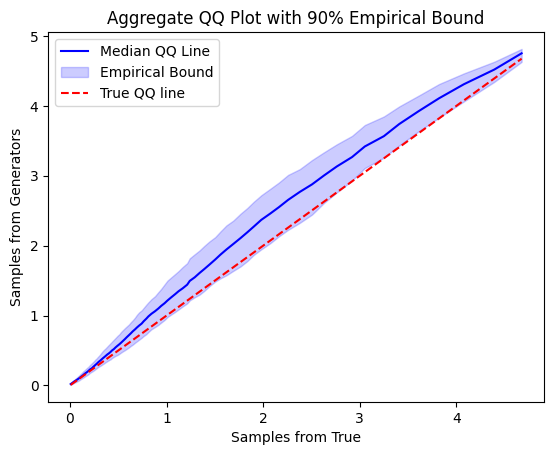}
        \caption{Subject 2: $9.8\%$ Quantile of risk score}
    \end{subfigure}
    \begin{subfigure}{.5\textwidth}
        \centering
        \includegraphics[width=\linewidth]{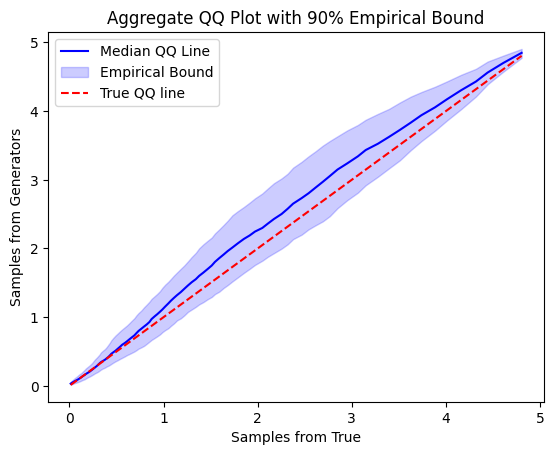}
        \caption{Subject 3: $39.2\%$ Quantile of risk score}
    \end{subfigure}%
    \begin{subfigure}{.5\textwidth}
        \centering
        \includegraphics[width=\linewidth]{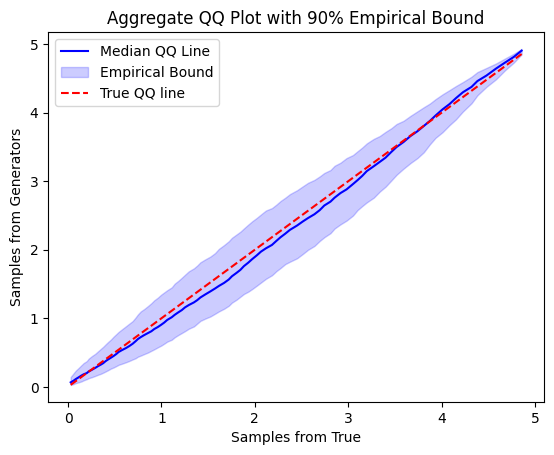}
        \caption{Subject 4: $85.5\%$ Quantile of risk score}
    \end{subfigure}
    \caption{QQ plot of true conditional samples ($x$-axis) and generated samples ($y$-axis) for PO Model, C=5, $N=4000$, $d=5$ : $(5\%,95\%)$ empirical bound Test Subject 1 to Subject 4.}
    \label{fig:poc5n4000d5qq}
\end{figure}

\begin{figure}[!htbp]
    \centering
    \begin{subfigure}{.5\textwidth}
        \centering
        \includegraphics[width=\linewidth]{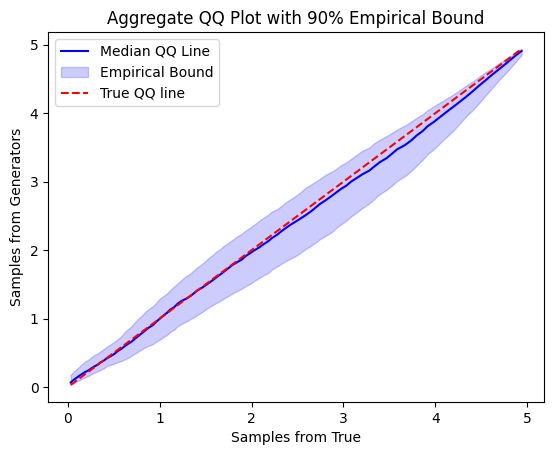}
        \caption*{(a) Subject 1: Random sampling}
    \end{subfigure}%
    \begin{subfigure}{.5\textwidth}
        \centering
        \includegraphics[width=\linewidth]{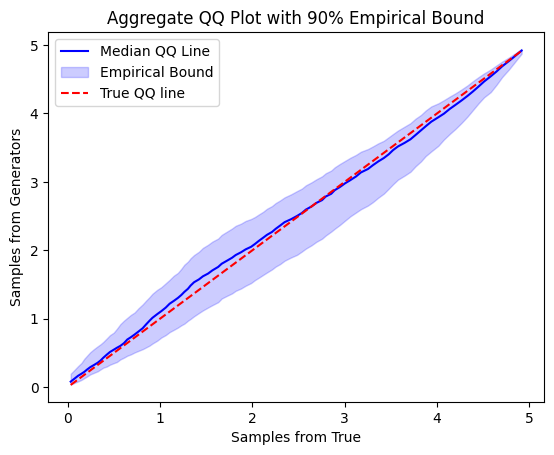}
        \caption*{(b) Subject 2: $9.8\%$ Quantile of risk score}
    \end{subfigure}
    \begin{subfigure}{.5\textwidth}
        \centering
        \includegraphics[width=\linewidth]{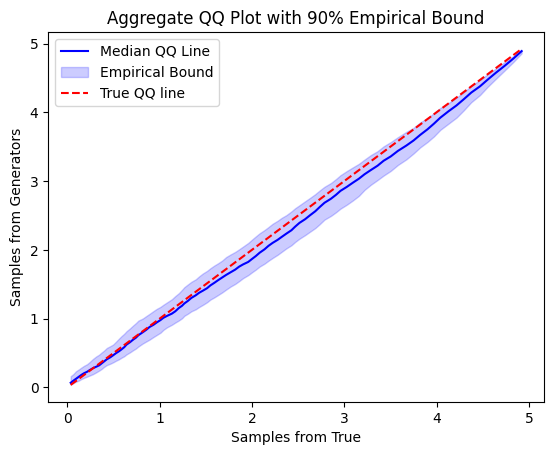}
        \caption*{(c) Subject 3: $39.2\%$ Quantile of risk score}
    \end{subfigure}%
    \begin{subfigure}{.5\textwidth}
        \centering
        \includegraphics[width=\linewidth]{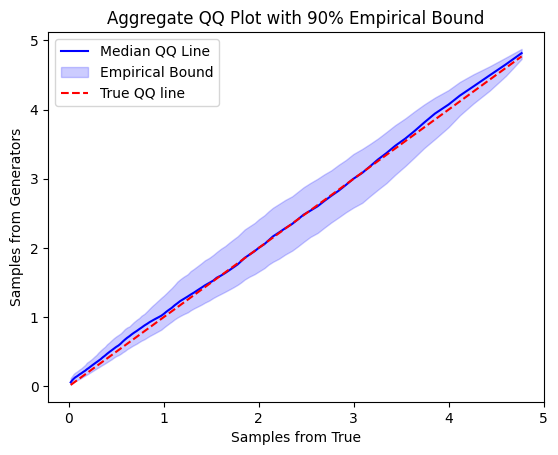}
        \caption{Subject 4: $85.5\%$ Quantile of risk score}
    \end{subfigure}
    \caption{QQ plot of true conditional samples ($x$-axis) and generated samples ($y$-axis) for PH Model, C=5, $N=4000$, $d=100$ : $(5\%,95\%)$ empirical bound for Test Subject 1 to Subject 4.}
    \label{fig:phc5n4000d100qq}
\end{figure}

\begin{figure}[!htbp]
    \centering
    \begin{subfigure}{.5\textwidth}
        \centering
        \includegraphics[width=\linewidth]{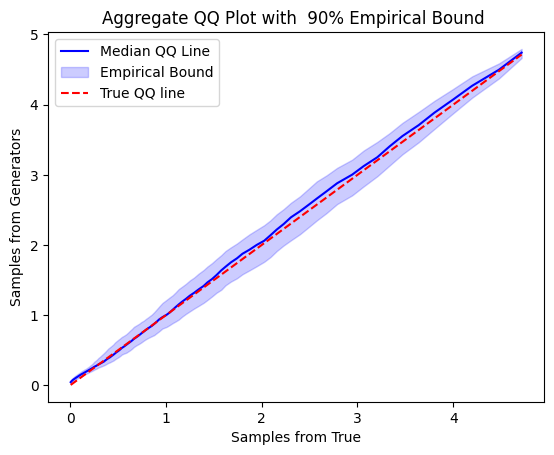}
        \caption{Subject 1: Random sampling}
    \end{subfigure}%
    \begin{subfigure}{.5\textwidth}
        \centering
        \includegraphics[width=\linewidth]{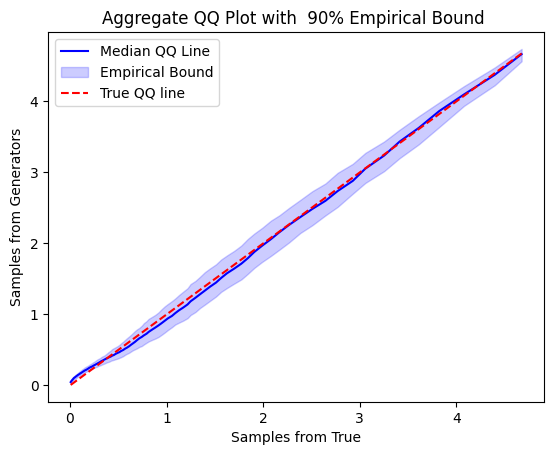}
        \caption{Subject 2: $9.8\%$ Quantile of risk score}
    \end{subfigure}
    \begin{subfigure}{.5\textwidth}
        \centering
        \includegraphics[width=\linewidth]{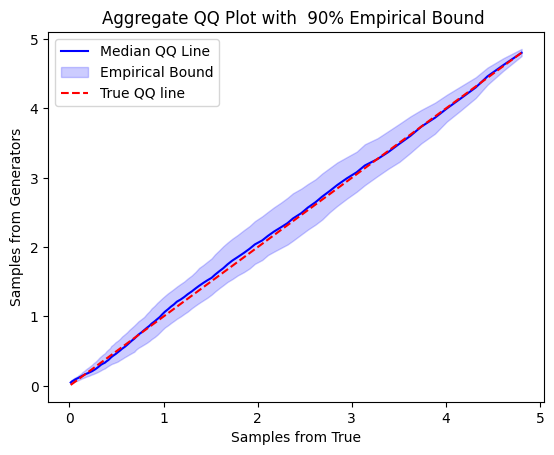}
        \caption{Subject 3: $39.2\%$ Quantile of risk score}
    \end{subfigure}%
    \begin{subfigure}{.5\textwidth}
        \centering
        \includegraphics[width=\linewidth]{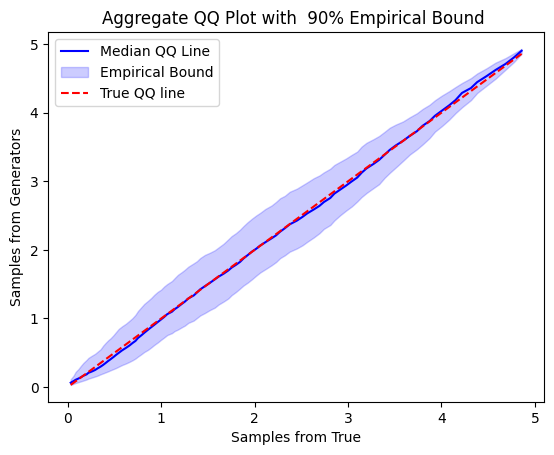}
        \caption{Subject 4: $85.5\%$ Quantile of risk score}
    \end{subfigure}
    \caption{QQ plot of true conditional samples ($x$-axis) and generated samples ($y$-axis) for PO Model, C=5, $N=4000$, $d=100$ : $(5\%,95\%)$ empirical bound for Test Subject 1 to Subject 4.}
        \vspace{-5mm}
    \label{fig:poc5n4000d100qq}
\end{figure}






\newpage

\section{A Real Data Example}

We applied SCENE to a real-world dataset from the Molecular Taxonomy of Breast Cancer International Consortium (METABRIC) study, accessible via the ``pycox" package in Python. The METABRIC dataset included gene expression and clinical features for 1,904 study participants. The outcome variable, time to death due to breast cancer, was right-censored for 801 (42\%) participants. We considered nine covariates, including four gene expressions (MKI67, EGFR, PGR, and ERBB2) and five clinical features (hormone treatment (HT), radiotherapy (RT), chemotherapy (CT), ER-positive status (ER-P), and age) and aimed to estimate breast cancer survival probabilities conditioned on these nine covariates.


    
    First, we obtained conditional survival function estimates using SCENE from the full databset, and compared their average to the KM estimate.  Results are presented in Figure \ref{fig:reliability} (a). Also shown (in dotted blue) are conditional survival function estimates corresponding to five randomly sampled individuals. 
    The KM estimate and the average of conditional survival function estimates from SCENE in general align well. 
To further illustrate SCENE's ability to estimate conditional survival functions, we obtained survival function estimates conditional on specific levels of each of the two covariates, MKI67 gene expression (low/high) and age (young/old). Low and high gene expression levels were defined as values below the $25\%$ quantile and above the $75\%$ quantile of gene expression levels, respectively. Similarly, young and old age groups were defined by ages below $25\%$ quantile and above the $75\%$ quantile of the age distribution. As shown in Figure \ref{fig:reliability} (b), survival probabilities for the young age group were higher compared to the old age group, while no substantial differences were observed between the low and high MKI67 gene expression groups.


\begin{figure}[!htbp]
    \centering
    \begin{subfigure}{.5\textwidth}
            \caption{}
        \centering
        \includegraphics[width=\linewidth]{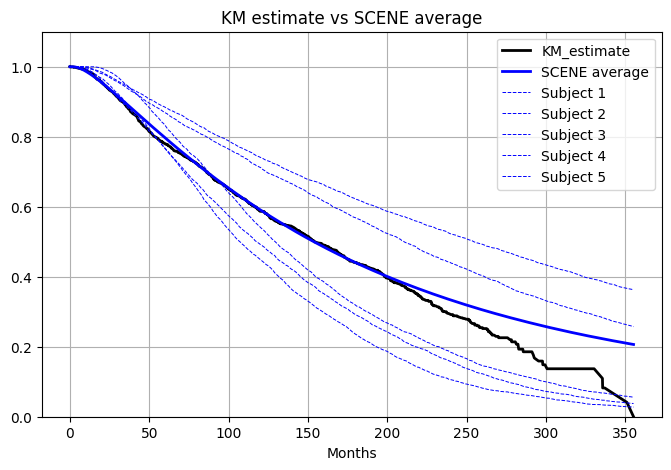}
    \end{subfigure}%
    \begin{subfigure}{.5\textwidth}
            \caption{}
        \centering
        \includegraphics[width=\linewidth]{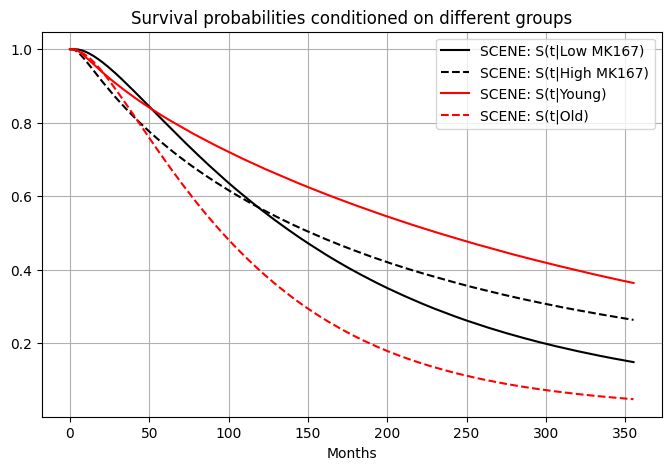}
    \end{subfigure}
    \caption{(a) Comparision of population level survival probabilities from KM estimator and SCENE  (b) Estimated survival probabilities conditioned on low, and high MK167 gene expression level, and young and old age groups}
    \label{fig:reliability}
\end{figure}

Second, we compared SCENE with DeepSurv and random survival forests, using the Concordance index (C-index) \citep{harrell1982cindex} as the evaluation metric for assessing the accuracy of survival probabilities. 
The C-index evaluates prediction performance by measuring the proportion of concordant pairs between predicted survival probabilities and observed survival times, with higher values indicating greater accuracy.
The C-index values were calculated using 5-fold cross-validation, averaged across test datasets, and are reported in Table \ref{Table:cindex-synthetic}. SCENE achieved the highest C-index value among the three methods, with the difference being especially noticeable when compared to Deepsurv. 


Lastly, we investigated the impact of variable selection on prediction accuracy by augmenting the METABRIC dataset with synthetic noise variables. Specifically, we added 50, 100, and 500 independent random variables sampled from a uniform distribution $[-1,1]^{p_n}$, where $p_n$ denotes the dimension of the random variables. This resulted in three synthetic datasets: Metabric-50, Metabric-100, and Metabric-500. The performance of SCENE with variable selection using variable importance discussed in Section \ref{subsection:variableselection}, random survival forests, and DeepSurv was evaluated using 5-fold cross-validation, with the C-index results summarized in Table \ref{Table:cindex-synthetic}. As shown in the table, SCENE consistently achieved the highest C-index values compared to RSF and Deepsurv.

\begin{table}[!htbp]
\caption{Comparison of the C-index on METABRIC and augmented METABRIC dataset with added noise variables, where the C-index and standard deviation (in parentheses) were calculated using 5-fold cross-validation.}
\label{Table:cindex-synthetic} 
\begin{center}
\begin{tabular}{cccc} \toprule
      Dataset & RSF & Deepsurv & SCENE \\
      \midrule
    Metabric & 0.6409(0.0133) & 0.6257(0.0328) & {\bf 0.6451(0.0124)} \\
    \hline
      Metabric-50 & 0.6189(0.0197) & 0.5479(0.0352) & {\bf 0.6331(0.0160)} \\
      Metabric-100 & 0.6160(0.0185) & 0.5311(0.0193) & {\bf 0.6313(0.0065)} \\
      Metabric-500 & 0.5811(0.0259) & 0.5126(0.0171) & {\bf 0.6064(0.0121)} \\
      \bottomrule  
\end{tabular}
\end{center}
\end{table}

    SCENE's superior performance in achieving high C-index values can be attributed to its ability to accurately identify key variables that influence survival probability while ignoring irrelevant noise variables. To investigate this capability, we calculated the variable importance, which is available for both random survival forests and SCENE. Covariates were ranked based on their importance values, as summarized in Table \ref{Table:Variable_Selected}.
    
    Age, ERBB2 gene expression, and the feature ER-positive consistently emerged as significant predictors for survival probability. However, as the number of added noise variables increased, identifying these critical variables became challenging, with random survival forests failing to detect them in Metabric-100 and Metabric-500 datasets. In contrast, SCENE consistently identified these important variables across all settings, demonstrating its robust performance in complex, noise-affected scenarios.



\begin{table}[htbp]
\caption{Comparison of Top 5 Selections from RSF and SCENE for Metabric, Metabric-50, Metabric-100, and Metabric-500 Data from the first fold. }
\label{Table:Variable_Selected}
\vspace{-0.2in}
\begin{center}
\begin{tabular}{cccccccccccc} 
\toprule
&   \multicolumn{2}{c}{Metabric} & & \multicolumn{2}{c}{Metabric-50} & & \multicolumn{2}{c}{Metabric-100} & & \multicolumn{2}{c}{Metabric-500} \\  
\cline{2-3} \cline{5-6} \cline{8-9} \cline{11-12}  
Method &  RSF &  SCENE & & RSF  & SCENE & & RSF  & SCENE & & RSF  & SCENE \\ 
\midrule
Top-1  &  Age & Age & & ERBB2 &Age & &CT& Age && N-435&ERBB2\\  
Top-2  &  CT & ERBB2 & & N27& N-49 &&N-26 &CT &&N-224&Age\\
Top-3  &  EGFR & HT& & N-41& ERBB2 && N-98 &ERBB2&& CT&ER-P\\  
Top-4  &  ER-P & ER-P & & N-49& N-47&&N-19&MKI67 && N-276&MKI67\\
Top-5  &  HT & PGR & & N-37& N-6& &N-47 &EGFR&&N-101&EGFR\\
\bottomrule
\end{tabular}
\end{center}
\end{table}



\section{Discussion}

In this paper, we have developed SCENE, a novel and flexible method for estimating conditional survival functions for right-censored time-to-event data. SCENE leverages the self-consistent equations for the conditional survival functions, representing a new approach to deep learning-based survival analysis. Unlike traditional methods, SCENE does not rely on parametric assumptions, proportional hazards assumptions, discrete-time assumptions, or estimation via partial likelihood. This positions SCENE as a unique non-parametric estimation framework, complementing the Kaplan-Meier estimator by integrating deep learning techniques.

We adopted a min-max optimization framework to train SCENE, enabling it to identify the survival function that satisfies weighted self-consistent equations for all possible non-negative, bounded weight functions, $\phi(\cdot)$. Furthermore, we established theoretical guarantees to support the proposed min-max optimization method. This framework not only facilitates the training of SCENE but also holds promise for broader applications in solving problems that involve infinitely many equations. 

We conducted a comprehensive set of experiments to evaluate SCENE’s performance. Across both real and simulated datasets, SCENE consistently outperformed or matched competing methods. In particular, by incorporating variable selection using variable importance \citep{SunSLiang2021}, SCENE demonstrated its effectiveness in handling high-dimensional covariates.

Here we only considered the right-censoring case. For other scenarios, such as interval censoring, our approach can be generalized by extending the self-consistent equation for interval censored data  \citep{turnbull1976interval-self}. This warrants future research.


\paragraph{Availability}
The code that implements the SCENE method can be found at \url{https://github.com/sehwankimstat/SCENE}.

\paragraph{Acknowledgments and Disclosure of Funding} This work was supported by grant R01 AI170254 from the National Institute of Allergy and Infectious Diseases (NIAID). 

\bibliography{reference}
\bibliographystyle{apalike} 

\newpage

\appendix

\section{Proofs}



    

    

\subsection{Proof of Proposition 1}

    \begin{proof} 
    
    It's obvious that $S_T^*$ satisfies (\ref{const:2}). Suppose there exists another solution, $S_T^a(t|x)$. This implies the existence of $\mathcal{T}_D = \{t : S_T^*(t|x) \neq S_T^a(t|x)\} \subseteq \mathcal{T}$, such that $\Pi(\mathcal{T}_D) > 0$, where $\Pi$ is a some probability measure on $(\mathbb{R}^+, \mathcal{R}^+)$, with $\mathcal{R}^+$ being the Borel $\sigma$-algebra. For $u \in \mathcal{T}_D$, either $S_T^*(u|x) > S_T^a(u|x)$ or $S_T^*(u|x) < S_T^a(u|x)$, allowing us to partition $\mathcal{T}_D$ as $\mathcal{T}_D = \mathcal{T}_{D,P} \cup \mathcal{T}_{D,N}$, where:
    \[
    \mathcal{T}_{D,P} = \{t : S_T^*(t|x) > S_T^a(t|x)\}, \quad 
    \mathcal{T}_{D,N} = \{t : S_T^*(t|x) < S_T^a(t|x)\}.
    \]
    Since $\Pi(\mathcal{T}_D) > 0$, $\mathcal{T}_D$ can be expressed as a union of open intervals as $\mathcal{T}_D = \bigcup_{k=0}^K (r_k, r_{k+1})$, with $r_k < r_{k+1}$ for some $K \in \mathbb{N}^+$. Thus, $\mathcal{T}_D$ must fall into one of two cases. 

    \noindent \textbf{Case (i)} $\Pi(\mathcal{T}_{D,P}) = 0$ (or $\Pi(\mathcal{T}_{D,N}) = 0$)
    
    This implies that $\frac{S_T^*(t|x)}{S_T^a(t|x)} < 1$ for all $t \in \mathcal{T}_D$ and $S_T^*(t|x)=S_T^a(t|x)$ for $t\in \mathcal{T} \setminus \mathcal{T}_D$. So, for any $t_0 > r_0$, we can derive:
    \[
    E_C \left[ \frac{S_T^*(C|x)}{S_T^a(C|x)} I(C \leq t_0) \right] < (1 - S_C^*(t_0|x)),
    \]
    which contradicts the self-consistent condition of $S_T^a(t_0|x)$. Similarly, for $\Pi(\mathcal{T}_{D,N}) = 0$, analogous reasoning leads to a contradiction.

    \noindent \textbf{Case (ii)} $\Pi(\mathcal{T}_{D,P}) > 0$, $\Pi(\mathcal{T}_{D,N}) > 0$
    
    In this scenario, we note that $(r_0, r_1) \subseteq \mathcal{T}_{D,P}$ or $(r_0, r_1) \subseteq \mathcal{T}_{D,N}$. Regardless of whether $S_T^*(t|x) > S_T^a(t|x)$ or $S_T^*(t|x) < S_T^a(t|x)$ in $t\in (r_0, r_1)$, the $S_T^a(t|x)$ does not solve the self-consistent equation. 

    \end{proof}
    
\subsection{Proof of Proposition 2}

    \begin{proof} $(\Rightarrow$) Let define $\mathcal{X}_P=\{x: S(t|x)> S_{T}^*(t|x)S_{C}^*(t|x)+E_C[
        \frac{S_T^*(C|x)}{S_{T}(C|x)} I(C\leq t) |x]S_T(t|x)\}$ and  $\mathcal{X}_N=\{x: S(t|x)< S_{T}^*(t|x)S_{C}^*(t|x)+E_C[
        \frac{S_T^*(C|x)}{S_{T}(C|x)} I(C\leq t) |x]S_T(t|x)\}$, then we can consider the function $\phi_P(x) = B$ for $x \in \mathcal{X}_P$ and $\phi_P(x) = 0$ otherwise. If $\mathcal{X}_P$ has a positive probability measure in the covariate space, then $D^I(t, S, \phi_P)$ has positive value which contradicts the condition $D^I(t, S, \phi_P) = 0$.  By a similar argument, we can define $\phi_N$ such that it contradicts $D^I(t, S, \phi_N) = 0$ if $\mathcal{X}_N$ has a positive measure. Consequently, for any probability measure on $\mathcal{X}$, the measures of $\mathcal{X}_P$ and $\mathcal{X}_N$ must both be zero.
            
        $(\Leftarrow$) If $S(t|X)$ solve the Equation \ref{const:2} almost surely, it is trivial that $D^I(t,S,\phi)=0$ for any $\phi$ by its definition.  
    \end{proof}

\subsection{Proof of Theorem \ref{thm:1}}

    \begin{proof} 
    $(\Rightarrow)$ Suppose $\underset{\phi \in \Phi_B}{\max} \ C(S, \phi) = 0$. This implies $C(S, \phi) = 0$ for all $\phi \in \Phi_B$, so that $D^I(t, S, \phi) = 0$ for all $t \in \mathcal{V}$ and $\phi \in \Phi_B$. Thus, for $S$ such that $\underset{\phi \in \Phi_B}{\max} \ C(S, \phi) = 0$, the $S$ satisfies Equation (\ref{const:2}) for $t \in \mathcal{V}$ and $x \in \mathcal{X}$ almost surely. By Proposition \ref{uniquesol}, it follows that $S_T(t|x) = S_T^*(t|x)$ for all $t \in \mathcal{V}$ and $x \in \mathcal{X}$ almost surely.
    
    $(\Leftarrow)$ Now suppose $S_T(t|x) = S_T^*(t|x)$ for all $t \in \mathcal{V}$ and $x \in \mathcal{X}$. Then $D^I(t, S, \phi) = 0$ for all $t \in \mathcal{V}$ and any $\phi$, which trivially implies that $C(S, \phi) = 0$ for all $\phi \in \Phi_B$, resulting $\underset{\phi \in \Phi_B}{\max} \ C(S, \phi) = 0$
    \end{proof}

\subsection{Proof of Theorem \ref{thm:consistency}}

First, we introduce and restate the terms necessary for proving the theorem. Let $Z_i$ denote the triplet of random variables consisting of the observed survival time, censoring indicator, and the covariate. More specifically, we define $z_i = \{\tilde{t}_i, \delta_i, x_i\}$ and introduce the following terms:
\begin{enumerate}
    \item $l(t,z,S,\phi)=S(t|x)\phi(x)-(I(\tilde{t}>t)\phi(x)+\frac{I(\delta=0)}{S(\tilde{t}|x)}I(\tilde{t}<t)S(t|x)\phi(x))$
    \item $D(t,S,\phi)=D^I(t,S,\phi)=E_Z[l(t,Z,S,\phi)]$
    \item $C(S,\phi)=E_V[D(V,S,\phi)^2]$
    \item $C_{M,N}(S,\phi)=\frac{1}{M}\sum_{m=1}^M (\frac{1}{N}\sum_{i=1}^N(l(V_m,Z_i,S,\phi)))^2$ 
\end{enumerate}
Additionally, we define $C(S)=\underset{\phi\in \Phi_B}{\max} C(S,\phi)$, $C_{M,N}(S)=\underset{\phi\in\Phi_B}{max}C_{M,N}(S,\phi)$ and $\bar{C}_{M,N}(S)=\underset{\phi\in \Phi^B_{\zeta}}{max}C_{M,N}(S,\phi)$.

    For $S^* = \arg \underset{S \in \mathcal{S}}{\min} C(S)$, we impose Lipschitz continuity and curvature conditions around the solution $S^*$, as in Assumption \ref{asm:lip_curvarture}. These conditions are standard \citep{Farrell_2021}. Furthermore, we assume regularity conditions for both $S^*$ and $\phi_{\hat{S}^{\text{SCENE}}}$ such that $\phi_{\hat{S}^{\text{SCENE}}} = \arg \underset{\phi \in \Phi_B}{\max} C_{M,N}(\hat{S}, \phi)\in \Phi_B$, as in Assumption \ref{asm:sobolev}. Since $S^*$ is a monotone decreasing function, there exists an inverse function $G^*(u, x)$ satisfying $P_U(G^*(U, x) > t) = S^*(t | x)$. Consequently, $S^* \in \mathcal{W}^{\beta,\infty}([-1,1]^{p+p_u})$ represents that $G^* \in \mathcal{W}^{\beta,\infty}([-1,1]^{p+p_u})$ with smoothness parameter $\beta$.

    And the function class $\mathcal{F}_{\text{DNN}}$ represents deep neural networks with $L$ layers, where the $l$th layer contains $H_l$ hidden units. For simplicity, we assume $H_l = H$ for all $l = 1, \dots, L$. Let $W$ denote the total number of parameters in the network, and $U$ the total number of hidden units across all layers.

Then, we begin the proof by decomposing the error between $\hat{S}^{\text{SCENE}}$ and $S^*$ into the approximation error and the stochastic error. Subsequently, we derive bounds for each term by utilizing the supporting lemmas as stated in Section \ref{supportlemma}.

\paragraph{Error decomposition}

 Let start with error decomposition by considering the best possible $\bar{S}$ that can be approximated in $\mathcal{S}_w$ the true $S^*$ as $\bar{S}=\arg\underset{S\in\mathcal{S}_{w}}{min}\|S-S^*\|_{\infty}$, where $\epsilon_S=\|\bar{S}-S^*\|_{\infty}$ Again, as discussed above, this expresses $\bar{G}=\arg\underset{G\in\mathcal{F}_{DNN}}{\min}\|G-G^*\|_{\infty}$. And for notation simplicity, let $\hat{S}$ denote the $\hat{S}^{\text{SCENE}}$.



\begin{alignat*}{2}
    c_{1,C}\|\hat{S}-S^*\|^2_{L_2(t,x)|_{\mathcal{V}}}&\leq C(\hat{S})-C(S^*) \\
    & \leq C(\hat{S})-C(S^*)-\bar{C}_{M,N}(\hat{S})+\bar{C}_{M,N}(\bar{S})\\
    & = C(\hat{S})-C(S^*)-(C_{M,N}(\hat{S})-C_{M,N}(S^*)) && \quad \cdots \quad (I) \\
    & \ \ + C_{M,N}(\hat{S})-\bar{C}_{M,N}(\hat{S}) && \quad \cdots \quad (II) \\
    & \ \ +  \bar{C}_{M,N}(S^*)-C_{M,N}(S^*) && \quad  \cdots \quad (III)\\
    & \ \ +\bar{C}_{M,N}(\bar{S})-\bar{C}_{M,N}(S^*) &&\quad \cdots \quad (IV)
\end{alignat*}

The first inequality follows from the curvature assumption, and the second inequality is due to the fact that $\hat{S} = \arg \underset{S \in \mathcal{S}_{\omega}}{\min} \bar{C}_{M,N}(S)$. Since $\bar{C}_{M,N}(S^*) \leq C_{M,N}(S^*)$, the term $(III)$ is negative. Thus, we only need to bound $(I)$, which is related to the stochastic error, and $(II)$ and $(IV)$, which are related to the approximation error. And let $E_{Z}$ denote the expectation with respect to $Z_1, \dots, Z_N$ and $E_{V}$ the expectation with respect to $V_1, \dots, V_M$.

\paragraph{Bound $(IV)$} We will apply the Bernstein inequality twice to bound the mean of a function with respect to two random variables, $Z$ and $T$. First, we apply the Bernstein inequality with respect to the random variable $Z$ for each $V_m$, $m = 1, \dots, M$. Then, with probability at least $1 - \exp(-\gamma)$, the following holds:
\[
\begin{split}
    \bar{C}_{M,N}(\bar{S})-\bar{C}_{M,N}(S^*)&\leq | \max_{\phi \in \Phi^B_{\zeta}} C_{M,N}(\bar{S},\phi)-  \max_{\phi \in \Phi^B_{\zeta}} C_{M,N}(S^*,\phi)|\leq  \max_{\phi \in \Phi^B_{\zeta}} |C_{M,N}(\bar{S},\phi)-C_{M,N}(S^*,\phi) |        \\
    &=\max_{\phi\in\Phi_B}\left|\frac{1}{M}\sum_{m=1}^M \left\{ \frac{1}{N}\sum_{i=1}^Nl(V_m,X_i,\bar{S},\phi) \right\}^2-\frac{1}{M}\sum_{m=1}^M \left\{\frac{1}{N}\sum_{i=1}^Nl(V_m,X_i,S^*,\phi)\right\}^2\right|\\
    &\leq 6\|\phi\|_{\infty}\max_{\phi\in\Phi_B}\frac{1}{M}\sum_{m=1}^M \left|\frac{1}{N}\sum_{i=1}^Nl(V_m,X_i,\bar{S},\phi)-\frac{1}{N}\sum_{i=1}^Nl(V_m,X_i,S^*,\phi)  \right| \\
    &\leq 6\|\phi\|_{\infty}\max_{\phi\in\Phi_B}\frac{1}{M}\sum_{m=1}^M \left|D(V_m,\bar{S},\phi)-D(V_m,S^*,\phi)\right|\\
    &\hspace{5cm} +\sqrt{\frac{2C_S^2\|\bar{S}-S^*\|_{\infty}^2\gamma}{N}}+\frac{6\|\phi\|_{\infty}\gamma}{3N}.
\end{split}
\]

The inequality on the third line follows from Lemma \ref{uniformlbound}, while the inequality on the fourth line follows from Assumption \ref{asm:lip_curvarture}, $\text{Var}_X(l(V_m, X_i, \bar{S}, \phi) - l(V_m, X_i, S^*, \phi)) \leq C_S^2 \|\bar{S} - S^*\|_{\infty}^2$ for all $m = 1, \dots, M$.

Similarly, we apply the Bernstein inequality with respect to the random variable $V$, using the fact that $Var(D(T_m,\bar{S},\phi)-D(T_m,S^*,\phi))\leq E[|D(V_m,\bar{S},\phi)-D(V_m,S^*,\phi)|^2]\leq C_S^2\|\bar{S}-S^*\|^2_{\infty}$. Then, with probability at least $1 - \exp(-\gamma)$, the following inequality holds:
\[
\begin{split}
    \frac{1}{M}\sum_{m=1}^M |D(V_m,\bar{S},\phi)-D(V_m,S^*,\phi)|\leq E_V\left\{\left|D(V,\bar{S},\phi)-D(V,S^*,\phi)\right|\right\}+\sqrt{\frac{2C_S^2\|\bar{S}-S^*\|^2_{\infty}\gamma}{M}}+\frac{6\|\phi\|_{\infty}\gamma}{3M}.
\end{split}
\]
All together, we can derive that, with probability at least $1 - 2\exp(-\gamma)$, the following inequality holds for some constants $C_{A,S}$, $C_{1,A}$, $C_{2,A}$, and $C_{3,A}$:
\begin{equation}\label{bound4}
\begin{split}
    \bar{C}_{M,N}(\bar{S})-\bar{C}_{M,N}(S^*)&\leq 6\|\phi\|_{\infty}\max_{\phi}E_T[D(T,\bar{S},\phi)-D(T,S^*,\phi^*)]\\
    & + 6\|\phi\|_{\infty}\sqrt{\frac{2C_S^2\|\bar{S}-S^*\|^2_{\infty}\gamma}{M}}+\frac{36\|\phi\|_{\infty}^2\gamma}{3M} \\
    & +\sqrt{\frac{2C_S^2\|\bar{S}-S^*\|_{\infty}^2\gamma}{N}}+\frac{6\|\phi\|_{\infty}\gamma}{3N}\\
    &\leq C_{1,A}\epsilon_S^2+C_{2,A}\epsilon_S(\frac{1}{\sqrt{M}}+\frac{1}{\sqrt{N}})\sqrt{\gamma}+C_{3,A}(\frac{1}{M}+\frac{1}{N})\gamma\\
    &\leq C_{A,S}\left\{(\frac{1}{\sqrt{M}}+\frac{1}{\sqrt{N}})\sqrt{\gamma} +\epsilon_S\right\}^2 \\
\end{split}
\end{equation}

\paragraph{Bound (II)} Note that $C_{M,N}(\hat{S}) - \bar{C}_{M,N}(\hat{S})$ represents the approximation error from restricting the function space from $\Phi_B$ to $\Phi_{\zeta}^B$. Let $\phi_{\hat{S}} = \arg \underset{\phi \in \Phi_B}{\max} C_{M,N}(\hat{S}, \phi)$ and denote $\bar{\phi} = \arg \underset{\phi \in \Phi_{\zeta}^B}{\min} \|\phi - \phi_{\hat{S}}\|_{\infty}$ with $\epsilon_{\phi} = \|\bar{\phi} - \phi_{\hat{S}}\|_{\infty}$. Then, we have:
\begin{equation}\label{bound3}
    \begin{split}
\underset{\phi\in \Phi_B}{\max}C_{M,N}(\hat{S},\phi)-\underset{\phi\in \Phi^B_{\zeta}}{\max}C_{M,N}(\hat{S},\phi)&=\underset{\phi\in\Phi^B_{\zeta}}{\min} \left\{C_{M,N}(\hat{S},\phi_{\hat{S}})-C_{M,N}(\hat{S},\phi)\right\}\\
&\leq C_{\phi}\underset{\phi\in \Phi^B_{\zeta}}{\min}\|\phi_{\hat{S}}-\phi\|_{\infty}=C_{\phi}\epsilon_{\phi} \\
\end{split}
\end{equation}
where inequality at second line comes from the Assumption \ref{asm:lip_curvarture}. The approximation error $\epsilon_{\phi}$, together with $\epsilon_S$ will be bounded at the end of section. 

\paragraph{Bound (I)} The term (I), $C(\hat{S}) - C(S^*) - (C_{M,N}(\hat{S}) - C_{M,N}(S^*))$, represents the empirical process term, which can be bounded using the Rademacher complexity. First, let us revisit the concept of Rademacher complexity. 

Let $\mathcal{F}$ denote the function class of $f$, where $f: \mathcal{Z} \to \mathbb{R}$, whose capacity we aim to evaluate. Rademacher complexity, roughly speaking, quantifies the supremum of the correlation between random signs $\eta \sim \text{Unif}\{-1,1\}$ and $f(Z)$, where $Z \sim D|_Z$ and $D|_Z$ is the distribution of $Z$. This measures the ability of $f$ to approximate random noise. The Rademacher complexity $\mathcal{R}_n(\mathcal{F})$ and the empirical Rademacher complexity $\hat{\mathcal{R}}_N(\mathcal{F})$ are defined as follows:
\[
    \begin{split}
\mathcal{R}_N(\mathcal{F})&=E_{\eta,Z}[\underset{f\in\mathcal{F}}{sup}\frac{1}{N}\sum_{i=1}^N\eta_i 
f(z_i)] \\
\hat{\mathcal{R}}_N(\mathcal{F})&=E_{\eta}[\underset{f\in\mathcal{F}}{sup}\frac{1}{N}\sum_{i=1}^N\eta_i 
f(z_i)].
    \end{split}
\]
Then, the Rademacher complexity is utilized to bound the difference between the empirical mean $\frac{1}{N} \sum_{i=1}^N f(z_i)$ and the population expectation $E[f(Z)]$. 

In our case, the loss function $l(t, z, S, \phi)$, given $S$ and $\phi$, is of primary interest, as $C(S, \phi)$ is constructed as the expectation of $l(t, z, S, \phi)$. Let consider the function class $\mathcal{L} = \{l=l(\cdot, \cdot, S, \phi),S\in \mathcal{S},\phi\in\Phi_B\}$. Since $|l| \leq 3\|\phi\|_{\infty}$ and $V[l] \leq E[l^2] \leq 9\|\phi\|_{\infty}^2$, by applying Lemma \ref{Symmetrization}, we can establish that, with probability at least $1 - 2\exp(-\gamma)$ for $m = 1, \dots, M$, the following inequality holds:
\begin{equation}\label{stochastic:1}
    \begin{split}
    \frac{1}{N}\sum_{i=1}^nl(V_m,Z_i,S,\phi) &\leq D(V_m,S,\phi) + 6\hat{\mathcal{R}}(\mathcal{L})+\sqrt{\frac{6\|\phi\|_{\infty}\gamma}{N}}+\frac{207\|\phi\|_{\infty}^2\gamma}{3N}\\
    & \leq D(V_m,S,\phi)+\frac{36\|\phi\|_{\infty}+\sqrt{6\|\phi\|_{\infty}\gamma}}{\sqrt{N}}+\frac{69\|\phi\|_{\infty}^2\gamma}{N} \\
\end{split}
\end{equation}
, where the second inequality follows from the fact that $\hat{\mathcal{R}}(\mathcal{L}) = E_{\eta} \left[\underset{l \in \mathcal{L}}{\sup} \frac{1}{N} \sum_{i=1}^N \eta_i l(V_m, Z_i,S,\phi)\right] \leq \frac{6\|\phi\|_{\infty}}{\sqrt{N}}$, since the loss function $l(\cdot, \cdot, S, \phi)$ is bounded by $\|\phi\|_{\infty}$ by Lemma \ref{uniformlbound}.

Similarly, we can derive
\begin{equation}\label{stochastic:2}   
\frac{1}{M}\sum_{m=1}^M D(V_m,S,\phi)^2 \leq C(S,\phi) + \frac{108\|\phi\|_{\infty}^2+\sqrt{18\|\phi\|_{\infty}^2\gamma}}{\sqrt{M}}+\frac{621\|\phi\|_{\infty}^4\gamma}{M}.
\end{equation}
And by combining Eq (\ref{stochastic:1}), (\ref{stochastic:2}), with probability at least $1-4exp(-\gamma)$, with for some constants $C_1,\dots,C_6$, we can get following inequality:
\begin{equation}\label{bound1}
    C_{M,N}(S,\phi)\leq C(S,\phi)+C_{1,S}(\frac{1}{\sqrt{N}}+\frac{\sqrt{\gamma}}{\sqrt{N}}+\frac{\gamma}{N})^2
    +C_{2,S}(\frac{1}{\sqrt{M}}+\frac{\sqrt{\gamma}}{\sqrt{M}}+\frac{\gamma}{M})
\end{equation}
Replacing $S$ with $\hat{S}$ and $S^*$, and taking the maximum with respect to $\phi \in \Phi_B$, we can bound term (I) by the factor given in Equation (\ref{bound1}).

\vspace{3mm}

Combining Equations (\ref{bound4}), (\ref{bound3}), and (\ref{bound1}), we conclude that when $M=O(N)$, with probability at least $1 - 6\exp(-\gamma)$, there exists a constant $C'$ independent of $N,M$ and $K$ such that the following inequalities hold:
\[
\begin{split}
    \|\hat{S}-S^*\|_{L_2(t,x)|_{\mathcal{V}}}^2&\leq C_{A,S}\left\{(\frac{1}{\sqrt{M}}+\frac{1}{\sqrt{N}})\sqrt{\gamma} +\epsilon_S\right\}^2 +C_{\phi}\epsilon_{\phi}\\
    &\hspace{4cm}+C_{1,S}(\frac{1}{\sqrt{N}}+\frac{\sqrt{\gamma}}{\sqrt{N}}+\frac{\gamma}{N})^2
    +C_{2,S}(\frac{1}{\sqrt{M}}+\frac{\sqrt{\gamma}}{\sqrt{M}}+\frac{\gamma}{M})\\
&\leq C' \left\{ \epsilon_S(\sqrt{\frac{\gamma}{N}}+\sqrt{\frac{\gamma}{M}})+\epsilon_S^2+\epsilon_{\phi}+N^{-\frac{1}{2}}+M^{-\frac{1}{2}}\right\} \\
&\leq C'\left(K^{-1}N^{-\rho_1}+N^{-\frac{1}{4}-\frac{\beta}{2\beta+p+nz}} +N^{-\frac{2\beta}{2\beta+p+nz}}+N^{-\frac{\beta}{2\beta+p}}+N^{-\frac{1}{2}}+M^{-\frac{1}{2}}+K^{-2}\right)\\
& \leq C'\left(K^{-1}N^{-\rho_1}+N^{-\rho_2}+M^{-1/2}+K^{-2}\right),
\end{split}
\] 
where $\rho_1=\min(\frac{1}{4},\frac{\beta}{2\beta + p + n_z})$, $\rho_2 = \min\left(\frac{1}{4}+\frac{\beta}{2\beta+p+nz},\frac{2\beta}{2\beta + p + n_z}, \frac{\beta}{2\beta + p}\right)$. The third inequality follows from controlling $\epsilon_{S}$ and $\epsilon_{\phi}$ and selecting $\gamma = O(N^{1/4}M^{1/4})$. To bound $\epsilon_{S}$, let consider
\[
\mathcal{S}_{\omega}^{\infty} = \left\{ S(\cdot|x) : S(\cdot|x) = \underset{K\to \infty}{lim} \frac{1}{K} \sum_{k=1}^{K} I(T_{k}(x) > \cdot), \; T_{k}(x) = G_{\omega}(U_k, x) \right\}.
\]
For any $S\in \mathcal{S}_{\omega}^{\infty}$, we can establish the bound
\[
\epsilon_{S}\leq  \left\{ \|\bar{S}-S\|_{\infty}+\|S-S^*\|_{\infty} \right\}. 
\]
The first term, $\|\bar{S} - S\|_{\infty}$, can be controlled by $O\left(\frac{1}{K}\right)$ using the Glivenko–Cantelli theorem. The second term, $\|S - S^*\|_{\infty}$, can be bounded via Lemma \ref{dnnapproximate} as $O\left(N^{-\frac{\beta}{2\beta + p + p_u}}\right)$, under the assumptions $W_S, U_S \asymp N^{(p + p_u)/(2\beta + p + p_u)}$ and $L_S \asymp \log N $. Similarly, $\epsilon_{\phi}$ can be bounded using analogous arguments by $O(N^{-\frac{\beta}{2\beta+p}})$, under the assumptions $W_{\phi}, U_{\phi} \asymp N^{p/(2\beta + p)}$ and $L_{\phi} \asymp \log N$.

\section{Supporting Lemmas}\label{supportlemma}

\begin{lemma}[Theorem 1 from \citep{yarotsky2017} and Lemma 7 from \citep{Farrell_2021}] \label{dnnapproximate} 

There exists a network class $\mathcal{F}_{DNN}$, with ReLU activation, \citep{yarotsky2017}, such tat for any $\epsilon>0:$

(a) $\mathcal{F}_{\text{DNN}}$ approximates the $W^{\beta,\infty}([-1,1]^d)$ in the sense for any $f\in W^{\beta,\infty}([-1,1]^d)$, there exsits a $f_n\in\mathcal{F}_{\text{DNN}}$ such that

\[
\|f_n-f\|_{\infty}\leq \epsilon
\]

(b) $\mathcal{F}_{DNN}$ has $L(\epsilon)\leq C(log(1/\epsilon)+1)$ and $W(\epsilon),U(\epsilon)\leq C \epsilon^{-d/\beta}(log(1/\epsilon)+1)$
    
\end{lemma}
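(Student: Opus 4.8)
The statement is the classical ReLU approximation theorem for Sobolev balls, so the plan is to reconstruct the Yarotsky--Farrell argument: approximate $f$ locally by Taylor polynomials on a uniform grid, realize those polynomials by ReLU subnetworks built from an approximate multiplication gadget, and glue the local pieces together with a partition of unity that ReLU networks represent exactly. The parameter counts in part (b) then fall out of counting grid cells and tracking the size of each subnetwork, while part (a) follows from balancing the Taylor error against the network error.

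First I would establish the core gadget: a ReLU network that approximates the squaring map $x\mapsto x^2$ on $[0,1]$. Following \citep{yarotsky2017}, I would write $x^2$ as a telescoping series in the iterated ``tooth'' (tent) functions $g_s$, where each $g_s$ is the $s$-fold composition of the basic tent map and is exactly representable with $O(1)$ ReLU units. Truncating the series at $m$ terms yields uniform error $O(2^{-2m})$ from a network of depth and width $O(m)$. From squaring I would obtain an approximate multiplication $\mathrm{mult}(x,y)\approx xy$ via the polarization identity $xy=\tfrac14\{(x+y)^2-(x-y)^2\}$, again with depth and width $O(\log(1/\delta))$ for accuracy $\delta$ on bounded inputs. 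Composing multiplication gadgets in a balanced binary tree yields any monomial $x^{\alpha}$ with $|\alpha|<\beta$, and linear combinations of these give an approximation to any fixed polynomial of degree $<\beta$, with network size depending only on $\beta$, $d$, and $\log(1/\delta)$.

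Next I would localize. I would lay down a uniform grid of spacing $1/M$ on $[-1,1]^d$, producing $O(M^d)$ cells, and associate to each grid point $m$ a partition-of-unity bump $\phi_m$ formed as a product of one-dimensional hat functions; since each hat function is a fixed ReLU combination and the $d$-fold product is handled by the multiplication gadget, each $\phi_m$ lies (approximately) in $\mathcal{F}_{\mathrm{DNN}}$ and $\sum_m\phi_m\equiv 1$. On the support of $\phi_m$ I would replace $f$ by its order-$(\beta-1)$ Taylor polynomial $P_m$ centered at the grid point; the Sobolev bound $f\in W^{\beta,\infty}$ gives the local error $\|f-P_m\|_{\infty,\mathrm{supp}\,\phi_m}\le C M^{-\beta}$. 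The global approximant is $f_n=\sum_m \phi_m\,\mathrm{mult}(\cdot,P_m)$, and its total error splits as the Taylor error $O(M^{-\beta})$ plus the accumulated multiplication and polynomial error $O(\delta)$. Choosing $M\asymp \epsilon^{-1/\beta}$ and $\delta$ polynomially small in $\epsilon$ forces both pieces below $\epsilon$; counting the $O(M^d)=O(\epsilon^{-d/\beta})$ cells, each carrying a subnetwork of size $O(\log(1/\epsilon))$, yields $W(\epsilon),U(\epsilon)\le C\epsilon^{-d/\beta}(\log(1/\epsilon)+1)$ and depth $L(\epsilon)\le C(\log(1/\epsilon)+1)$, establishing part (b).

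The main obstacle will be the careful bookkeeping of error propagation and network size through the tower of compositions: the squaring series must be truncated so that its $O(\log(1/\delta))$ depth does not inflate the overall depth beyond $O(\log(1/\epsilon))$, while the $O(M^d)$ local subnetworks must be arranged in \emph{parallel} rather than in series so that depth stays logarithmic and only the width and parameter count grow like $\epsilon^{-d/\beta}$. A second delicate point is that the multiplication error is incurred $O(M^d)$ times, so its sum could exceed $\epsilon$; this is controlled by taking the per-gadget accuracy $\delta$ to scale as a sufficiently high power of $\epsilon/M^d$, whose extra logarithmic cost is absorbed into the stated $(\log(1/\epsilon)+1)$ factors. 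Since the construction is exactly that of \citep{yarotsky2017} and \citep{Farrell_2021}, the remaining work is the routine tracking of constants and the verification that the fused partition-of-unity-times-polynomial network indeed realizes $f_n$ within $\mathcal{F}_{\mathrm{DNN}}$.
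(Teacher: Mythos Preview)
Your proposal correctly reconstructs the Yarotsky construction---the squaring gadget via iterated tent maps, multiplication by polarization, local Taylor expansion on a uniform grid of spacing $M^{-1}\asymp\epsilon^{1/\beta}$, and a ReLU-representable partition of unity---and your error and complexity accounting is sound. The paper, however, does not prove this lemma at all: it is stated in the supporting-lemmas section purely as a citation of \citep{yarotsky2017} and \citep{Farrell_2021}, so there is no in-paper proof to compare against, and your sketch simply supplies what the paper outsources.
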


\begin{lemma}[Symmetrization, Theorem 2.1 in \citep{Bartlett_2005} or Lemma 5 in \citep{Farrell_2021}]\label{Symmetrization}
For all $g \in \mathcal{G}$, $|g| \leq G$, and $V[g] \leq V$, then, with probability at least $1 - 2e^{-\gamma}$, we have:
\[
\sup_{g \in \mathcal{G}} \mathbb{E}g - \mathbb{E}_n[g] \leq 6\hat{\mathcal{R}}_n(\mathcal{G}) + \sqrt{\frac{2V\gamma}{n}} + \frac{23G\gamma}{3n},
\]
where $\hat{\mathcal{R}}_n(\mathcal{G})$ is the empirical Rademacher complexity of the class $\mathcal{G}$, $\mathbb{E}g$ denotes the expectation of $g$, and $\mathbb{E}_n[g]$ is the empirical mean of $g$ over the sample of size $n$.
\end{lemma}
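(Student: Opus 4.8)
The statement is a uniform deviation bound of Bartlett--Bousquet--Mendelson type, so the plan is to follow the classical three-ingredient recipe of empirical process theory -- concentration of the supremum, symmetrization, and desymmetrization of the Rademacher complexity -- while tracking every constant so that they collapse into the advertised $6\hat{\mathcal{R}}_n(\mathcal{G})$, $\sqrt{2V\gamma/n}$, and $23G\gamma/(3n)$. Throughout I assume the mild separability/measurability hypotheses on $\mathcal{G}$ that make the supremum well behaved (satisfied here since $\mathcal{G}$ is generated by neural-network parameters ranging over a compact set).

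First I would set $Z = \sup_{g \in \mathcal{G}}\bigl(\mathbb{E}g - \mathbb{E}_n[g]\bigr)$ as the object to bound and apply Talagrand's concentration inequality in Bousquet's form. Since $|g| \le G$ and the variance is bounded by $V$, this yields, with probability at least $1 - e^{-\gamma}$, a bound of the shape $Z \le \mathbb{E}Z + \sqrt{2\gamma\,(V + 2G\,\mathbb{E}Z)/n} + c\,G\gamma/n$. The appearance of $\mathbb{E}Z$ inside the variance proxy is the first technical annoyance; I would tame it with the elementary splitting $\sqrt{2\cdot 2G\,\mathbb{E}Z\,\gamma/n} \le \mathbb{E}Z + (\text{const})\,G\gamma/n$ (i.e.\ $2ab \le a^2 + b^2$), so that the self-referential term is absorbed and the high-probability bound reduces to $Z \le \kappa\,\mathbb{E}Z + \sqrt{2V\gamma/n} + (\text{const})\,G\gamma/n$ for an explicit small integer $\kappa$.

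Next I would control $\mathbb{E}Z$ by symmetrization: introducing a ghost sample and i.i.d.\ Rademacher signs $\eta_i$, the standard symmetrization inequality gives $\mathbb{E}Z \le 2\,\mathcal{R}_n(\mathcal{G})$, the population Rademacher complexity. Because the lemma is phrased in terms of the empirical complexity $\hat{\mathcal{R}}_n(\mathcal{G})$, I would then desymmetrize: note $\mathcal{R}_n(\mathcal{G}) = \mathbb{E}\bigl[\hat{\mathcal{R}}_n(\mathcal{G})\bigr]$ and observe that, viewed as a function of the sample, $\hat{\mathcal{R}}_n$ has bounded differences -- replacing one $z_i$ perturbs it by at most $2G/n$ -- so McDiarmid's inequality gives $\mathcal{R}_n(\mathcal{G}) \le \hat{\mathcal{R}}_n(\mathcal{G}) + G\sqrt{2\gamma/n}$ with probability at least $1 - e^{-\gamma}$. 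Combining the two high-probability events by a union bound produces the $1 - 2e^{-\gamma}$ in the statement, and chaining the factor $2$ from symmetrization with the factor $\kappa$ absorbed from the concentration step is exactly what produces the coefficient $6$ in front of $\hat{\mathcal{R}}_n$.

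The main obstacle is bookkeeping rather than conceptual. One must choose the free constants in the $2ab \le a^2 + b^2$ splitting and in the desymmetrization step so that simultaneously (a) the $\mathbb{E}Z$ term is fully absorbed into $6\hat{\mathcal{R}}_n$, (b) the variance contribution remains exactly $\sqrt{2V\gamma/n}$, and (c) the several $G\gamma/n$ pieces arising from Bousquet's inequality, from absorbing $\mathbb{E}Z$, and from the $G\sqrt{2\gamma/n}$ desymmetrization slack sum to at most $23G\gamma/(3n)$. Verifying that these constants indeed collapse to the published values -- and checking that the variance-proxy manipulations are valid for all $\gamma, n$ in the stated regime -- is the delicate part; the remainder is the textbook symmetrization argument, for which I would simply cite \citep{Bartlett_2005}.
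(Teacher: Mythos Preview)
The paper does not supply a proof of this lemma at all; it is listed as a supporting result and simply attributed to \cite{Bartlett_2005} (Theorem 2.1) and \cite{Farrell_2021} (Lemma 5). So there is no ``paper's own proof'' to compare against --- the authors treat the inequality as a black box imported from the literature.

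That said, your sketch is the standard and correct route to this inequality: Bousquet's version of Talagrand's concentration for the supremum of the empirical process, followed by symmetrization $\mathbb{E}Z \le 2\mathcal{R}_n(\mathcal{G})$, followed by a bounded-differences/McDiarmid step to replace the population Rademacher complexity by the empirical one, with a union bound accounting for the $1 - 2e^{-\gamma}$ probability. The only caveat is the one you already flag: matching the exact constants $6$, $\sqrt{2V\gamma/n}$, and $23G\gamma/(3n)$ requires carefully choosing the free parameters in the $2ab \le a^2 + b^2$ absorption and in the desymmetrization slack, and this bookkeeping is precisely what the cited references carry out. For the purposes of this paper, a one-line citation is all that is expected.
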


\begin{lemma}[Uniform bound of loss function]\label{uniformlbound} For all $t\in \mathbb{R},x\in \mathbb{R}^p$ and function $S$, and observed survival time $\tilde{t}\in \mathbb{R}$ and censoring indicator $\delta$, we have
\[
|l(t,z,S,\phi)|\leq 3\|\phi\|_{\infty} \quad .
\]    
\end{lemma}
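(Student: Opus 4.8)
The plan is to bound directly each of the three additive terms in the definition of the loss $l$ and combine them by the triangle inequality, using only that $S(\cdot|x)$ is a conditional survival function --- so $0\leq S(u|x)\leq 1$ for every $u$ and $S(\cdot|x)$ is non-increasing --- together with the trivial bound $0\leq I(\cdot)\leq 1$ on indicators. Recall
\[
l(t,z,S,\phi)=S(t|x)\phi(x)-I(\tilde{t}>t)\phi(x)-\frac{I(\delta=0)}{S(\tilde{t}|x)}\,I(\tilde{t}<t)\,S(t|x)\phi(x).
\]

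First, $|S(t|x)\phi(x)|\leq\|\phi\|_{\infty}$ since $0\leq S(t|x)\leq 1$; likewise $|I(\tilde{t}>t)\phi(x)|\leq\|\phi\|_{\infty}$ since the indicator is at most $1$. The only term not controlled by pure boundedness is the last one, which contains the ratio $S(t|x)/S(\tilde{t}|x)$ that could in principle be large; the key observation is that this ratio is multiplied by $I(\tilde{t}<t)$, and on the event $\{\tilde{t}<t\}$ monotonicity of $S(\cdot|x)$ gives $S(t|x)\leq S(\tilde{t}|x)$, so $0\leq S(t|x)/S(\tilde{t}|x)\leq 1$ there (when the numerator vanishes, the ratio is read as $0$, consistently). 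Hence the absolute value of the third term is at most $I(\delta=0)\,I(\tilde{t}<t)\,\{S(t|x)/S(\tilde{t}|x)\}\,|\phi(x)|\leq\|\phi\|_{\infty}$.

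Adding the three bounds by the triangle inequality gives $|l(t,z,S,\phi)|\leq 3\|\phi\|_{\infty}$, which is the claim. No real obstacle is anticipated; the only point worth flagging is the cancellation $I(\tilde{t}<t)\cdot S(t|x)/S(\tilde{t}|x)\leq 1$, which is precisely the structural reason the self-consistent-equation loss stays bounded even though it formally divides by a survival probability that can be arbitrarily small --- and this uniform bound is exactly what is invoked (as Lemma \ref{uniformlbound}) in the Rademacher/symmetrization steps of the proof of Theorem \ref{thm:consistency}.
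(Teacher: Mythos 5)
Your proof is correct and follows essentially the same route as the paper's: bound the first two terms by $\|\phi\|_{\infty}$ using $0\leq S\leq 1$ and $0\leq I(\cdot)\leq 1$, control the ratio term via monotonicity $S(t|x)\leq S(\tilde{t}|x)$ on $\{\tilde{t}<t\}$, and sum by the triangle inequality. Your explicit flagging of the monotonicity cancellation is exactly the key step the paper uses, so nothing further is needed.
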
 

\begin{proof} Since the survival function $S(t | x) \in [0,1] $ and $S(t | x) \leq S(\tilde{t} | x) $ when $ \tilde{t} \leq t$, we can derive the following inequality:
    \[
    \begin{split}
    l(t,z,S,\phi)&=S(t|x)\phi(x)-(I(\tilde{t}>t)\phi(x)+\frac{I(\delta=0)}{S(\tilde{t}|x)}I(\tilde{t}<t)S(t|x)\phi(x)) \\
                  &\leq \|\phi\|_{\infty}+\|\phi\|_{\infty}+\frac{1}{S(t|x)}S(t|x)\phi(x)\leq 3\|\phi\|_{\infty}.
    \end{split}
    \]
\end{proof}

\section{Experimental settings}

We set \(K = 400\), \(p_u = 5\), and use a mini-batch size of 5, with \(U_k \sim \text{Unif}[-1,1]^{p_u}\) for all studies.

\subsection{Simulation Study}

\subsubsection{Low-Dimensional Case}

For the conditional distribution generator, we use the architecture \((p + p_u) - 1000 - 1000 - 1000 - 1\) with ReLU activation, and for \(\phi\), we use the architecture \(p - 1000 - 1000 - 1\) with ReLU activation. The conditional distribution generator was trained using Adam optimization with a learning rate of \(2 \times 10^{-4}\) and parameters \((\beta_1, \beta_2) = (0, 0.9)\). The \(\phi\) model was trained using SGD with a learning rate of \(1 \times 10^{-3}\) and momentum of 0.9, for a total of 50 epochs.

\subsubsection{High-Dimensional Case}

For the conditional distribution generator, we use the architecture \((p + p_u) - 100 - 100 - 100 - 1\) with ReLU activation, and for \(\phi\), we use the architecture \(p - 1000 - 1000 - 1\) with ReLU activation. The conditional distribution generator was trained using Adam optimization with a learning rate of \(2 \times 10^{-4}\) and parameters \((\beta_1, \beta_2) = (0, 0.9)\). The \(\phi\) model was trained using Adam with a learning rate of \(1 \times 10^{-4}\) and parameters \((\beta_1, \beta_2) = (0.5, 0.999)\). We trained SCENE without variable selection until either a total of 120 epochs was reached or the average weight importance for covariates was larger than the weight importance for auxiliary variables. After that, we trained SCENE with variable selection for an additional 20 epochs.

\subsection{Real Data Analysis}

For the conditional distribution generator, we use the architecture \((p + p_u) - 100 - 100 - 1\), trained using the Adam optimizer with a learning rate of \(2 \times 10^{-4}\) and parameters \((\beta_1, \beta_2) = (0, 0.9)\). For \(\phi\), we use the architecture \(p - 1000 - 1000 - 1\), trained using the Adam optimization with a learning rate of \(2 \times 10^{-6}\) and parameters \((\beta_1, \beta_2) = (0.5, 0.999)\). For the synthetic data case, when the dimension of the added noise is 50 or 100, we trained the model without variable selection for a total of 200 epochs. When the dimension of the added noise is 500, the training continued for a total of 300 epochs. After that, we trained for an additional 20 epochs with variable selection.

\newpage

\section{More results for Simulations}

\subsection{PH Model / Moderate Censoring / Low dimensional}

\begin{figure}[!htbp]
    \centering
    \begin{subfigure}{.5\textwidth}
        \centering
        \includegraphics[width=\linewidth]{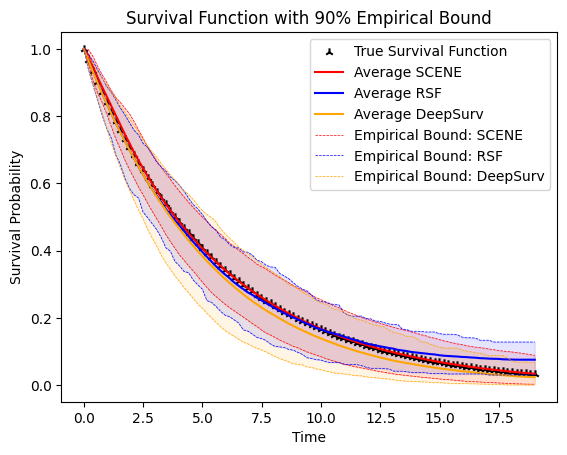}
        \caption{Subject 1: Random sampling}
    \end{subfigure}%
    \begin{subfigure}{.5\textwidth}
        \centering
        \includegraphics[width=\linewidth]{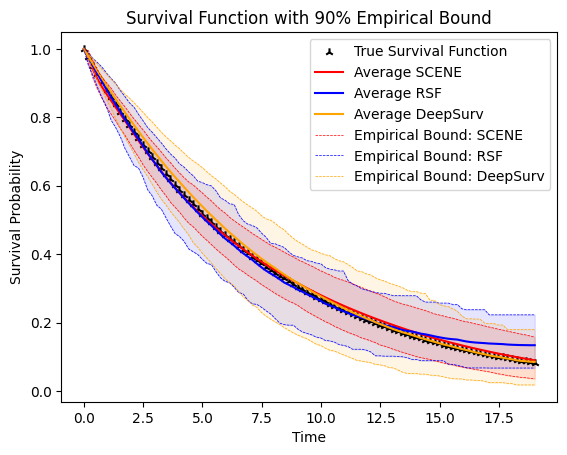}
        \caption{Subject 2: $9.8\%$ Quantile of risk score}
    \end{subfigure}
    \begin{subfigure}{.5\textwidth}
        \centering
        \includegraphics[width=\linewidth]{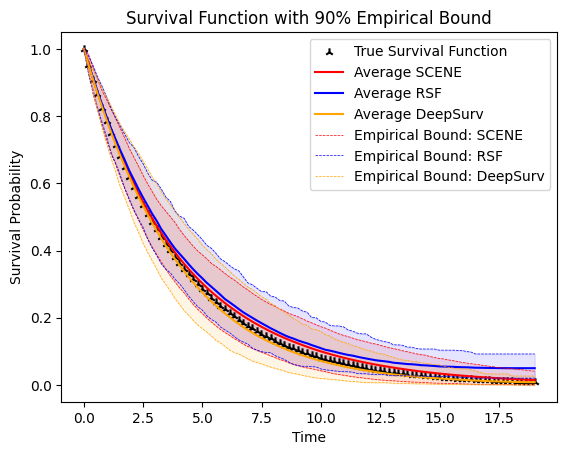}
        \caption{Subject 3: $39.2\%$ Quantile of risk score}
    \end{subfigure}%
    \begin{subfigure}{.5\textwidth}
        \centering
        \includegraphics[width=\linewidth]{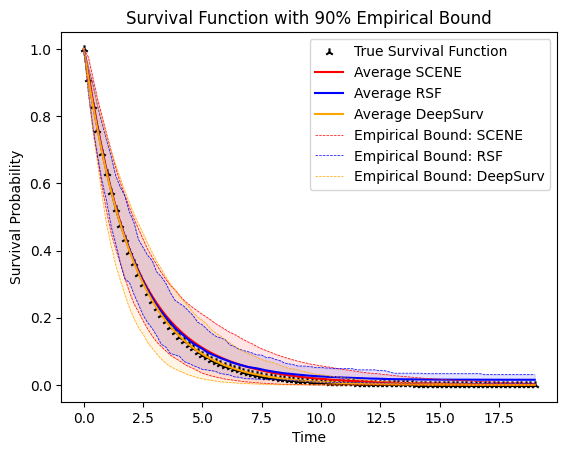}
        \caption{Subject 4: $85.5\%$ Quantile of risk score}
    \end{subfigure}
    \caption{Comparison of conditional survival function estimation for PH Model, C=19, $N=4000$, $d=5$ : $(5\%,95\%)$ empirical bound for Test Subject 1 to Subject 4. }
\end{figure}

\begin{figure}[!htbp]
    \centering
    \begin{subfigure}{.5\textwidth}
        \centering
        \includegraphics[width=\linewidth]{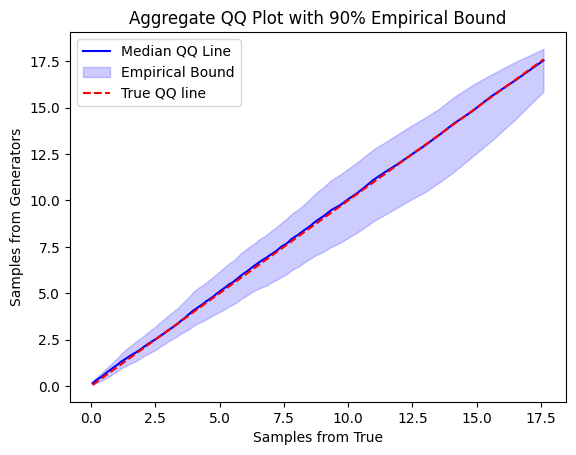}
        \caption{Subject 1: Random sampling}
    \end{subfigure}%
    \begin{subfigure}{.5\textwidth}
        \centering
        \includegraphics[width=\linewidth]{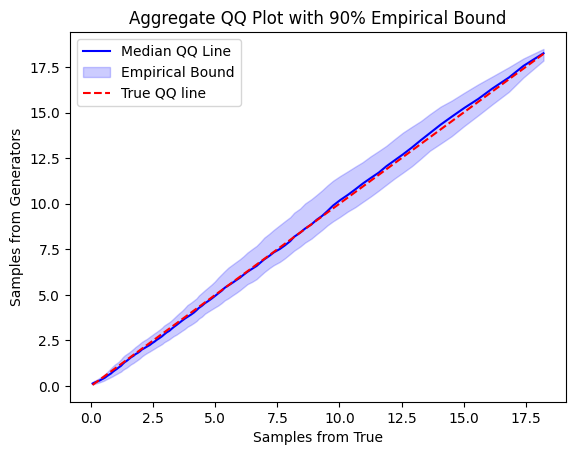}
        \caption{Subject 2: $9.8\%$ Quantile of risk score}
    \end{subfigure}
    \begin{subfigure}{.5\textwidth}
        \centering
        \includegraphics[width=\linewidth]{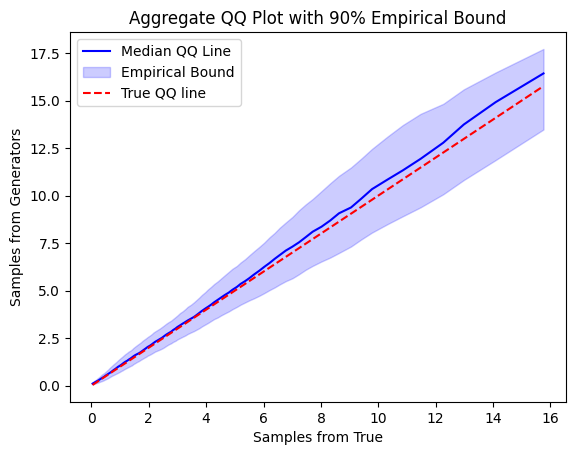}
        \caption{Subject 3: $39.2\%$ Quantile of risk score}
    \end{subfigure}%
    \begin{subfigure}{.5\textwidth}
        \centering
        \includegraphics[width=\linewidth]{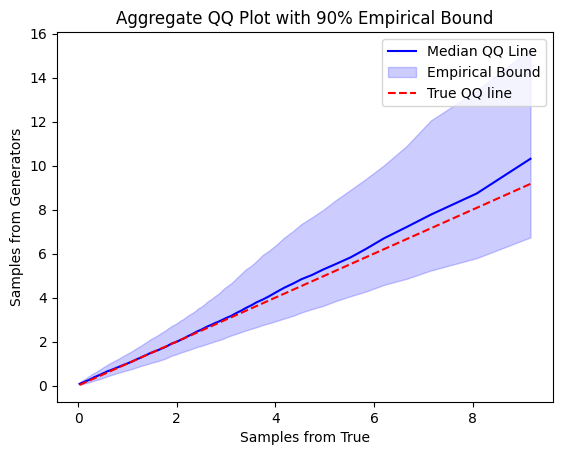}
        \caption{Subject 4: $85.5\%$ Quantile of risk score}
    \end{subfigure}
    \caption{QQ plot of true conditional samples ($x$-axis) and generated samples ($y$-axis) for PH Model, C=19, $N=4000$, $d=5$ : $(5\%,95\%)$ empirical bound for Test Subject 1 to Subject 4.}
\end{figure}

\newpage

\subsection{PO Model / Moderate Censoring / Low dimensional}

\begin{figure}[!htbp]
    \centering
    \begin{subfigure}{.5\textwidth}
        \centering
        \includegraphics[width=\linewidth]{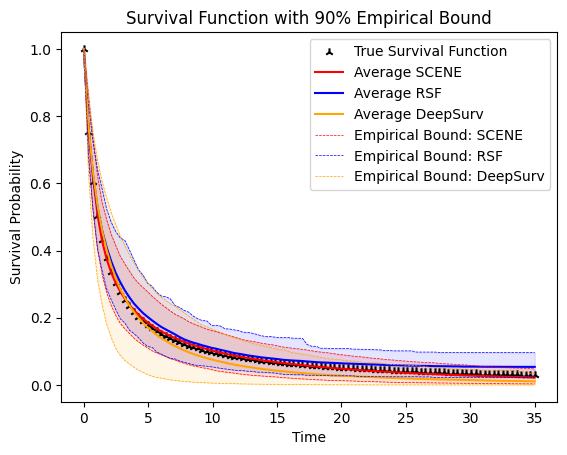}
        \caption{Subject 1: Random sampling}
    \end{subfigure}%
    \begin{subfigure}{.5\textwidth}
        \centering
        \includegraphics[width=\linewidth]{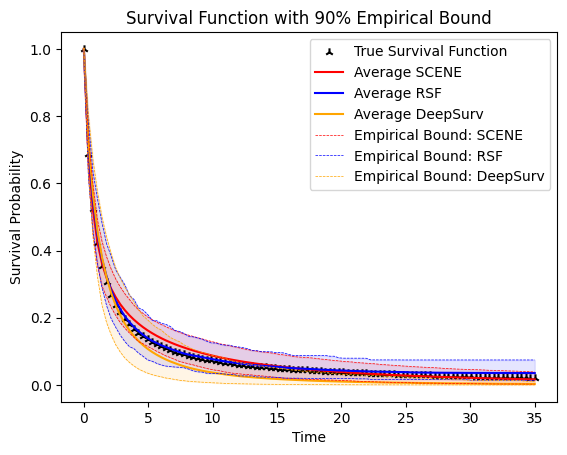}
        \caption{Subject 2: $9.8\%$ Quantile of risk score}
    \end{subfigure}
    \begin{subfigure}{.5\textwidth}
        \centering
        \includegraphics[width=\linewidth]{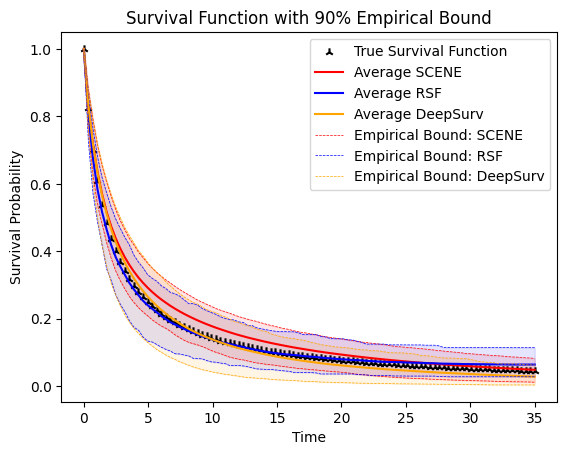}
        \caption{Subject 3: $39.2\%$ Quantile of risk score}
    \end{subfigure}%
    \begin{subfigure}{.5\textwidth}
        \centering
        \includegraphics[width=\linewidth]{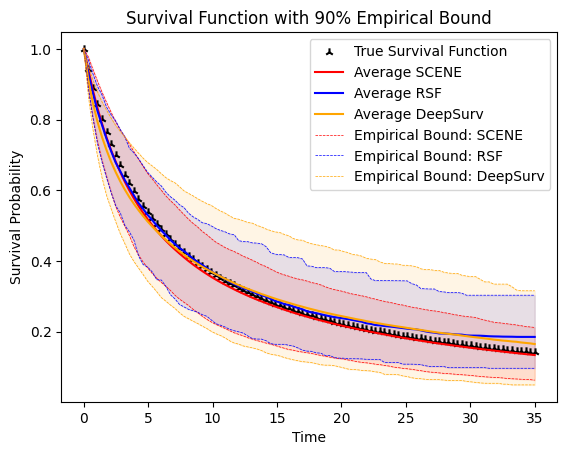}
        \caption{Subject 4: $85.5\%$ Quantile of risk score}
    \end{subfigure}
    
    \caption{Comparison of conditional survival function estimation for PO Model, C=35, $N=4000$, $d=5$ : $(5\%,95\%)$ empirical bound for Test Subject 1 to Subject 4.  }
\end{figure}

\begin{figure}[!htbp]
    \centering
    \begin{subfigure}{.5\textwidth}
        \centering
        \includegraphics[width=\linewidth]{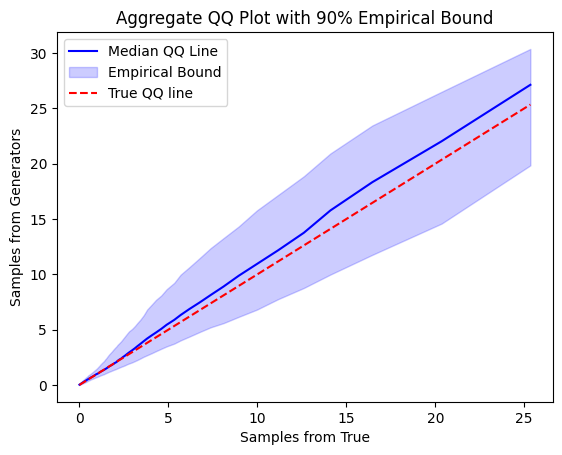}
        \caption{Subject 1: Random sampling}
    \end{subfigure}%
    \begin{subfigure}{.5\textwidth}
        \centering
        \includegraphics[width=\linewidth]{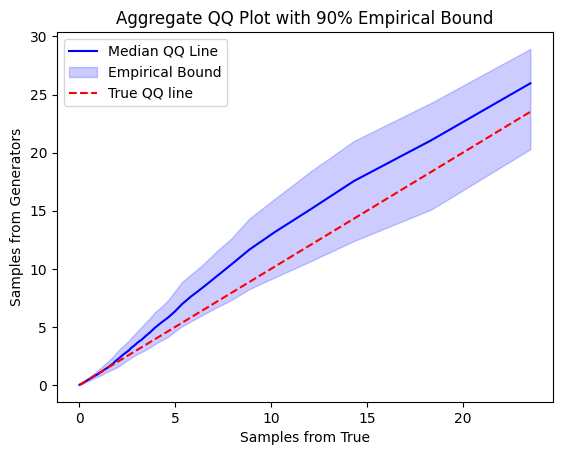}
        \caption{Subject 2: $9.8\%$ Quantile of risk score}
    \end{subfigure}
    \begin{subfigure}{.5\textwidth}
        \centering
        \includegraphics[width=\linewidth]{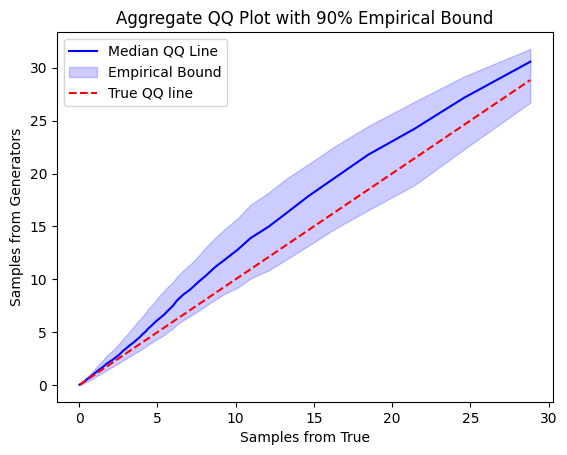}
        \caption{Subject 3: $39.2\%$ Quantile of risk score}
    \end{subfigure}%
    \begin{subfigure}{.5\textwidth}
        \centering
        \includegraphics[width=\linewidth]{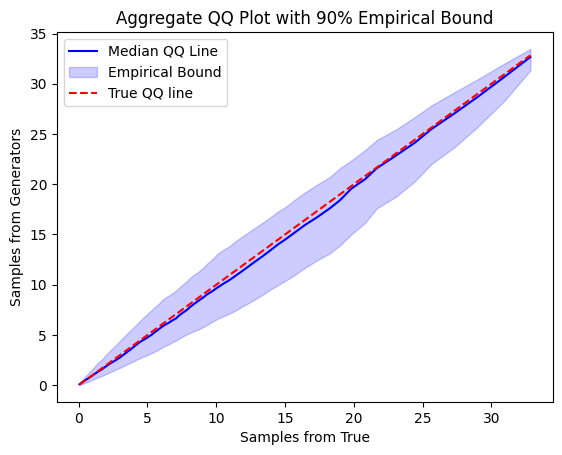}
        \caption{Subject 4: $85.5\%$ Quantile of risk score}
    \end{subfigure}
    \caption{QQ plot of true conditional samples ($x$-axis) and generated samples ($y$-axis) for PO Model, C=35, $N=4000$, $d=5$ : $(5\%,95\%)$ empirical bound for Test Subject 1 to Subject 4. }
    
\end{figure}

\newpage

\subsection{PH Model / Moderate Censoring / High dimensional}

\begin{figure}[!htbp]
    \centering
    \begin{subfigure}{.5\textwidth}
        \centering
        \includegraphics[width=\linewidth]{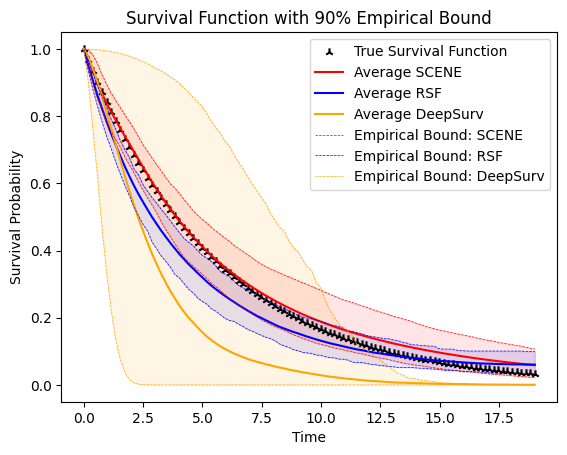}
        \caption{Subject 1: Random sampling}
    \end{subfigure}%
    \begin{subfigure}{.5\textwidth}
        \centering
        \includegraphics[width=\linewidth]{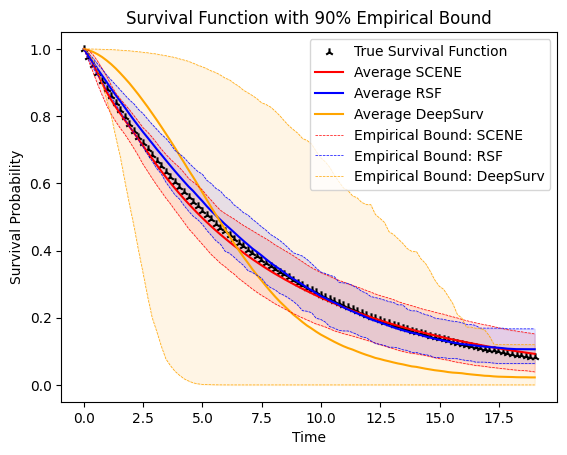}
        \caption{Subject 2: $9.8\%$ Quantile of risk score}
    \end{subfigure}
    \begin{subfigure}{.5\textwidth}
        \centering
        \includegraphics[width=\linewidth]{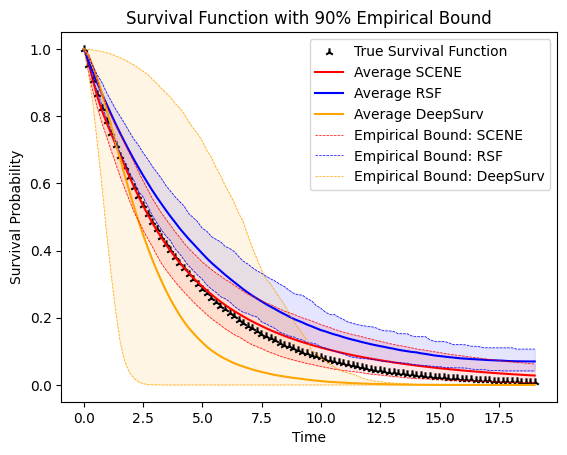}
        \caption{Subject 3: $39.2\%$ Quantile of risk score}
    \end{subfigure}%
    \begin{subfigure}{.5\textwidth}
        \centering
        \includegraphics[width=\linewidth]{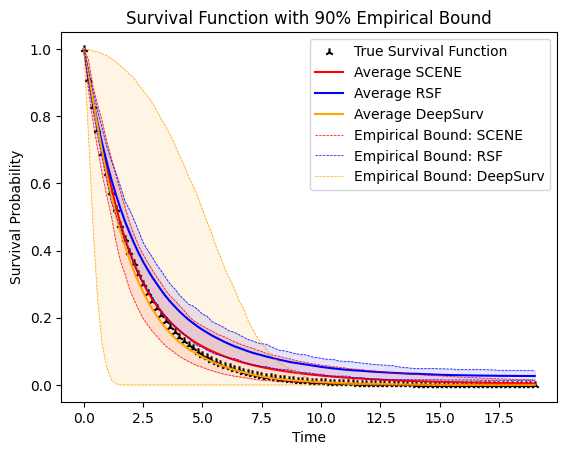}
        \caption{Subject 4: $85.5\%$ Quantile of risk score}
    \end{subfigure}
    \caption{Comparison of conditional survival function estimation for PH Model, C=19, $N=4000$, $d=100$ : $(5\%,95\%)$ empirical bound for Test Subject 1 to Subject 4. }
\end{figure}

\begin{figure}[!htbp]
    \centering
    \begin{subfigure}{.5\textwidth}
        \centering
        \includegraphics[width=\linewidth]{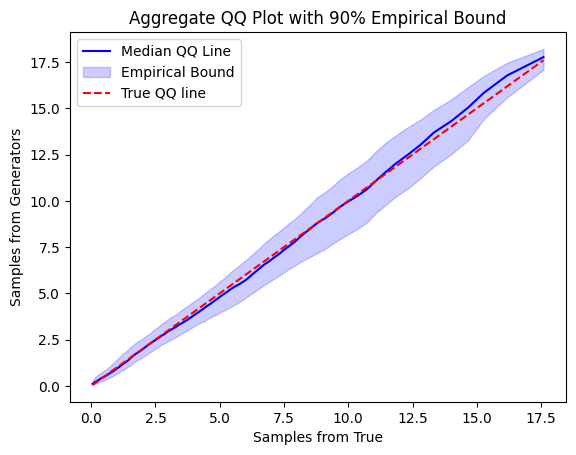}
        \caption{Subject 1: Random sampling}
    \end{subfigure}%
    \begin{subfigure}{.5\textwidth}
        \centering
        \includegraphics[width=\linewidth]{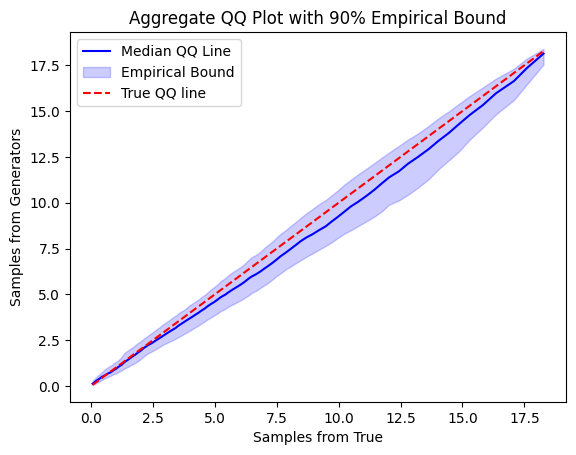}
        \caption{Subject 2: $9.8\%$ Quantile of risk score}
    \end{subfigure}
    \begin{subfigure}{.5\textwidth}
        \centering
        \includegraphics[width=\linewidth]{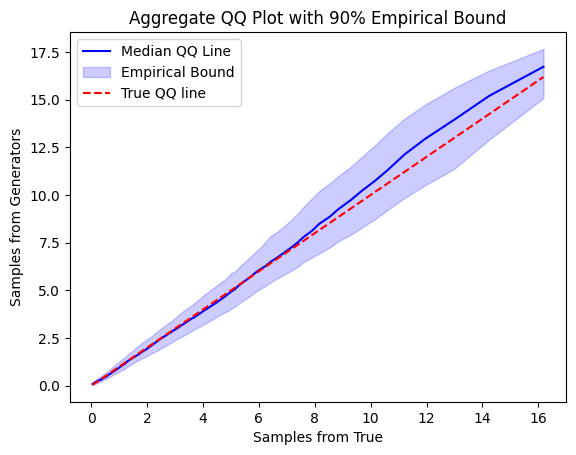}
        \caption{Subject 3: $39.2\%$ Quantile of risk score}
    \end{subfigure}%
    \begin{subfigure}{.5\textwidth}
        \centering
        \includegraphics[width=\linewidth]{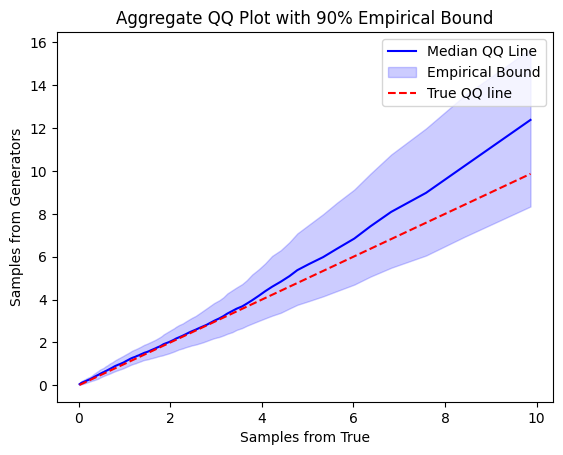}
        \caption{Subject 4: $85.5\%$ Quantile of risk score}
    \end{subfigure}
    \caption{QQ plot of true conditional samples ($x$-axis) and generated samples ($y$-axis) for PH Model, C=19, $N=4000$, $d=100$ : $(5\%,95\%)$ empirical bound for Test Subject 1 to Subject 4.}
    \label{fig:individual-confidence-bands}
\end{figure}

\newpage

\subsection{PO Model/ Moderate Censoring / High dimensional}

\begin{figure}[!htbp]
    \centering
    \begin{subfigure}{.5\textwidth}
        \centering
        \includegraphics[width=\linewidth]{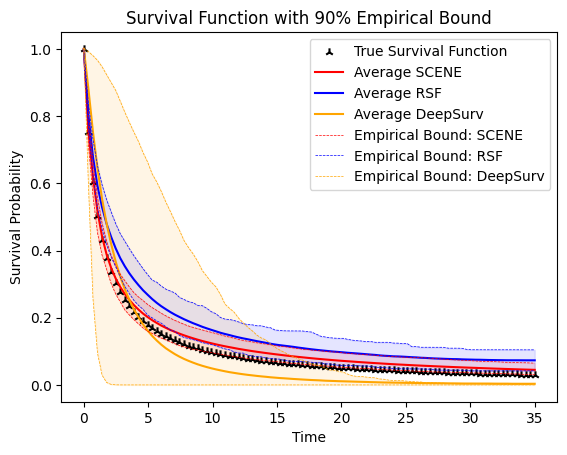}
        \caption{Subject 1: Random sampling}
    \end{subfigure}%
    \begin{subfigure}{.5\textwidth}
        \centering
        \includegraphics[width=\linewidth]{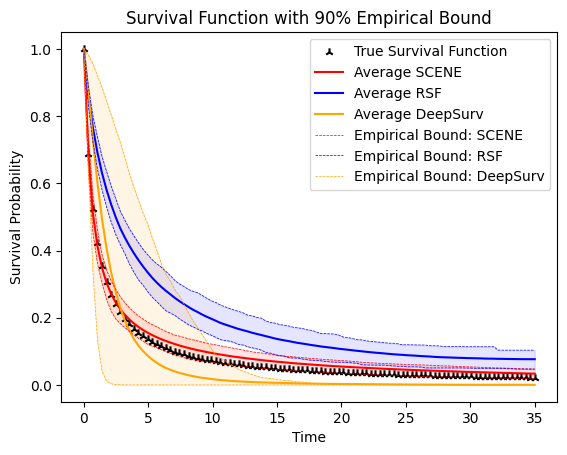}
        \caption{Subject 2: $9.8\%$ Quantile of risk score}
    \end{subfigure}
    \begin{subfigure}{.5\textwidth}
        \centering
        \includegraphics[width=\linewidth]{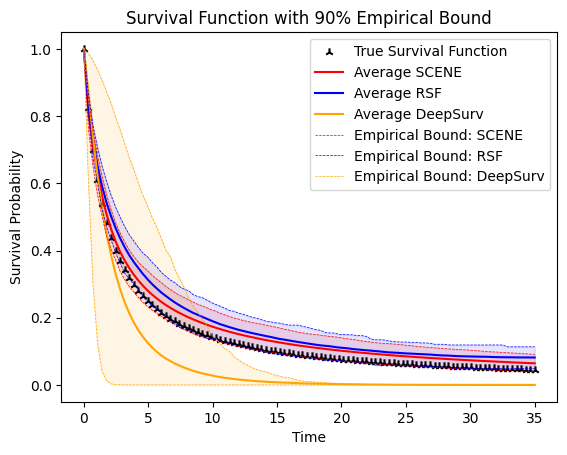}
        \caption{Subject 3: $39.2\%$ Quantile of risk score}
    \end{subfigure}%
    \begin{subfigure}{.5\textwidth}
        \centering
        \includegraphics[width=\linewidth]{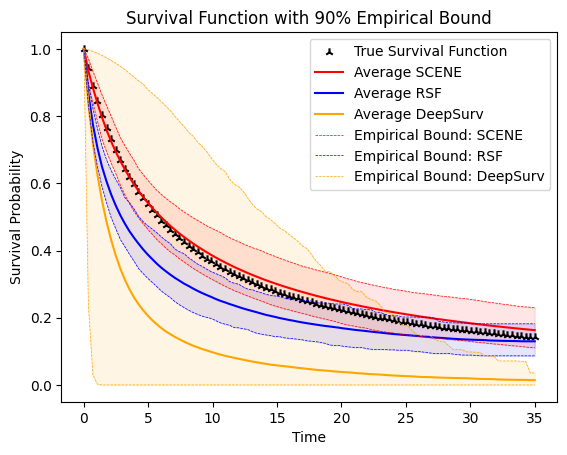}
        \caption{Subject 4: $85.5\%$ Quantile of risk score}
    \end{subfigure}
    \caption{PO Model, C=35, $N=4000$, $d=100$ : $(5\%,95\%)$ empirical bound for Test Subject 1 to Subject 4.}
    \label{fig:individual-confidence-bands}
\end{figure}

\begin{figure}[!htbp]
    \centering
    \begin{subfigure}{.5\textwidth}
        \centering
        \includegraphics[width=\linewidth]{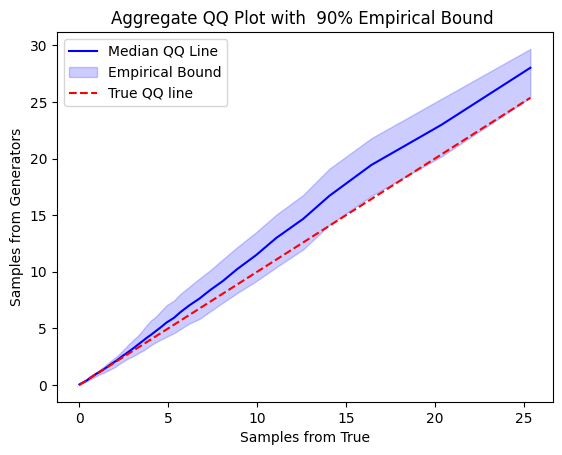}
        \caption{Subject 1: Random sampling}
    \end{subfigure}%
    \begin{subfigure}{.5\textwidth}
        \centering
        \includegraphics[width=\linewidth]{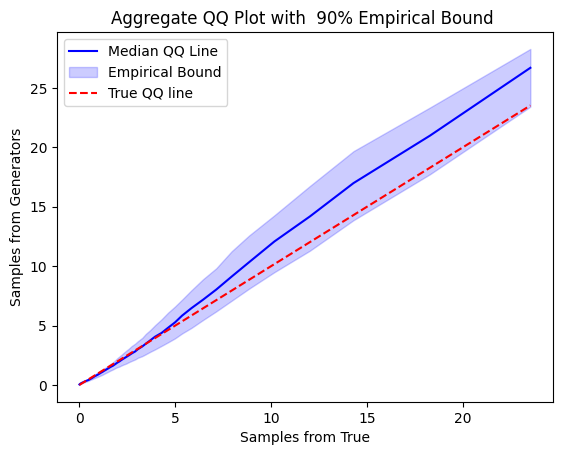}
        \caption{Subject 2: $9.8\%$ Quantile of risk score}
    \end{subfigure}
    \begin{subfigure}{.5\textwidth}
        \centering
        \includegraphics[width=\linewidth]{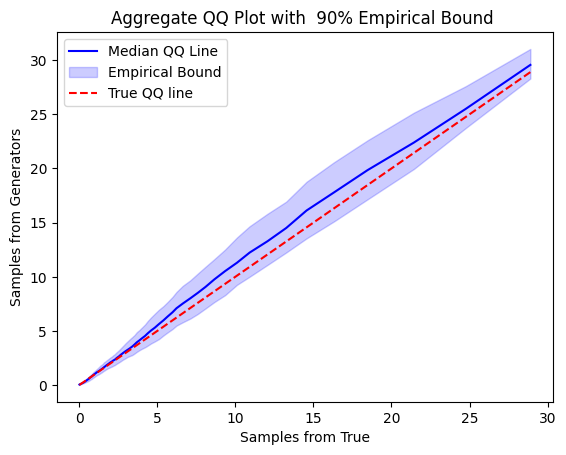}
        \caption{Subject 3: $39.2\%$ Quantile of risk score}
    \end{subfigure}%
    \begin{subfigure}{.5\textwidth}
        \centering
        \includegraphics[width=\linewidth]{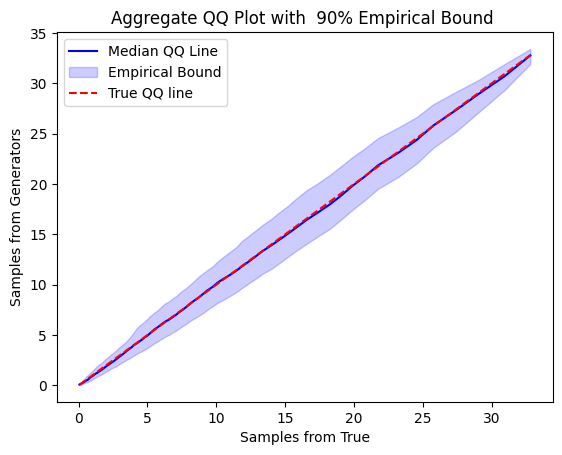}
        \caption{Subject 4: $85.5\%$ Quantile of risk score}
    \end{subfigure}
    \caption{PO Model, C=35, $N=4000$, $d=100$ : $(5\%,95\%)$ empirical bound for Test Subject 1 to Subject 4.}
    \label{fig:individual-confidence-bands}
\end{figure}

\end{document}